\definecolor{myedit}{rgb}{1.0,0,0}
\definecolor{mytodo}{rgb}{0,1.0,1.0}
\definecolor{lightgray}{rgb}{0.8, 0.8, 0.8}
\newtheorem{prop}{Proposition}
\newtheorem{thm}{Theorem}
\newtheorem{rem}{Remark}
\title{\LARGE \bf 
Collision Avoidance for Dynamic Obstacles with Uncertain Predictions using Model Predictive Control
}
\author{Siddharth H. Nair, Eric H. Tseng, Francesco Borrelli
\thanks{
SHN and FB are with the Model Predictive Control Laboratory, UC Berkeley. HET is with Ford Research and Advanced Engineering. Emails: 
\{siddharth\_nair, fborrelli\}@berkeley.edu, htseng@ford.com}
}
\begin{document}

\maketitle
\thispagestyle{empty}
\pagestyle{empty}

\begin{abstract}
We propose a Model Predictive Control (MPC) for collision avoidance between an autonomous agent and dynamic obstacles with uncertain predictions. The collision avoidance constraints are imposed by enforcing positive distance between convex sets representing the agent and the obstacles, and tractably reformulating them using Lagrange duality. This approach allows for smooth collision avoidance constraints even for polytopes, which otherwise require mixed-integer or non-smooth constraints. We consider three widely used descriptions of the uncertain obstacle position: 1) Arbitrary distribution with polytopic support, 2) Gaussian distributions and 3) Arbitrary distribution with first two moments known. For each case  we obtain deterministic reformulations of the collision avoidance constraints. The proposed MPC formulation optimizes over feedback policies to reduce conservatism in satisfying the collision avoidance constraints. The proposed approach is validated using simulations of traffic intersections in CARLA.
\end{abstract}

\section{Introduction}
\label{sec:introduction}
\subsection*{Motivation}
Autonomous vehicle technologies have seen a surge in popularity over the last decade, with the potential to improve flow of traffic, safety and fuel efficiency \cite{nhtsa}. While existing technology is being gradually introduced into structured scenarios such as highway driving and low-speed parking, autonomous driving in urban settings is challenging in general, due to the uncertainty in surrounding agents' behaviours.  Significant research has been devoted to building predictive models for these behaviours, to provide nominal trajectories for these agents and also characterize the uncertainty in deviating from the nominal predictions \cite{multipath_2019, trajectron_2020}.

In this work, we use these uncertain predictions of the surrounding agents (denoted as obstacles) to design a planning framework for the controlled agent for collision avoidance. Our main focus is to design a planner that can solve the planning problem  1) efficiently (measured as time to compute the solution, critical for real-time deployment), and 2) reliably (measured as low rate  of infeasibility,  critical for  fewer interventions of backup planners).  We investigate this problem in the context of constrained optimal control and use Model Predictive Control (MPC), the state-of-the-art technique for real-time optimal control \cite{morari1999model, benblog}. 

\subsection*{Related work}
MPC is a popular technique for real-time collision avoidance for autonomous driving \cite{shen2021collision, brudigam2021stochastic, zhou2018joint, wang2020non} and robotics \cite{de2021scenario, castillo2020real, bujarbaruah2021learning, zhu2019chance, dixit2021risk}. A typical MPC algorithm computes control inputs by solving a finite horizon, constrained optimal control problem in a receding horizon fashion \cite{borrelli2017predictive}. The collision avoidance problem involves checking if there is a non-empty intersection between two general sets (corresponding to the geometries of the agent and obstacle). This is a non-convex problem and NP-hard in general \cite{canny1988complexity}, and application-specific simplifications are commonly used for the improving the tractability of the resulting optimal control problem. The most common simplifications involve convexifying one or both of the sets as a 1) point, 2) affine space, 3) sphere, 4) ellipsoid or 5) polytope. Combinations of 1)-4) are convenient for their simplicity and computational tractability, but tend to be conservative since the shape of an actual car is not well represented by such sets. Polytopes offer compact representations for obstacles in tight environments and are popular in autonomous driving applications \cite{cruise}, but using them in the collision avoidance problem results in non-smooth constraints, which require specialized solvers or mixed-integer reformulations. The authors of \cite{zhang2020optimization} consider the dual perspective of collision avoidance for static obstacles; checking for existence of a separating hyperplane between two sets. This perspective allows for smooth collision checking conditions for convex, cone-representable sets (including polytopes) by introducing additional dual variables.

We use this dual formulation in the MPC for avoiding collisions with dynamic obstacles, in the presence of prediction uncertainty in the pose of the agent and obstacles. The MPC finds an optimal sequence of parameterized policies, assuming feedback from the agent's and obstacles' states. Compared to optimizing over open-loop sequences, optimizing over feedback policies enhances feasibility of the MPC optimization problem due to the ability to react to different trajectory realizations of the agent and obstacles along the prediction horizon. For computational efficiency, we convexify the problem for finding optimal solutions or detecting infeasibility quickly. The work \cite{soloperto2019collision} is the closest to our approach, which proposes a nonlinear Robust MPC scheme that optimizes over policies and uses the dual collision avoidance formulation with safety guarantees (under assumptions of a safety region and a backup policy). However, their policies do not account for feedback from the obstacles, and their formulation is specific to uncertainty distributions with polytopic support.

\subsection*{Contributions}
\begin{itemize}
    \item MPC formulations for collision avoidance with dynamic obstacles, for three different descriptions of the prediction uncertainty:  1) Arbitrary distributions with polytopic support, 2) Gaussian distributions, and 3) Arbitrary distributions with first two moments known.
    \item Deterministic reformulations of collision avoidance between the agent and dynamic obstacles, with convex geometries and uncertain trajectory predictions in closed-loop with a parameterized feedback policy over the agent's and obstacles' states.
\end{itemize}
\subsection*{Notation}
The index set $\{k_1,k_1+1,\dots, k_2\}$ is denoted by $\mathcal{I}_{k_1}^{k_2}$. For a proper cone $\mathcal{K}$ and $x,y\in\mathcal{K}$, we have $x\succeq_{\mathcal{K}}y\Leftrightarrow x-y\in\mathcal{K}$. The dual cone of $\mathcal{K}$ is given by the convex set $\mathcal{K}^*=\{y| y^\top x\geq 0, \forall x\in\mathcal{K}\}$. $||\cdot||_p$ denotes the $p-$norm. $\otimes$ denotes the Kronecker product.

\section{Problem Formulation}
\label{sec:prblm_f}
\subsection{Dynamics and Geometry of Agent and Obstacles}
Consider a controlled agent described by a linear time-varying discrete-time model
\small\begin{align}\label{eq:ev_dyn}
    x_{t+1}&=A_tx_t+B_tu_t+E_tw_t\nonumber\\
    p_t&=Cx_t+c_t
\end{align}\normalsize
where $x_t\in\mathbb{R}^{n_x}, u_t\in\mathbb{R}^{n_u}, w_t\in\mathbb{R}^{n_x}$ are the state, input and process noise respectively and $A_t,B_t,E_t,C,c_t$ are the system matrices at time $t$. The vector $p_t\in\mathbb{R}^n$ describes the position of the autonomous agent in a global, Cartesian coordinate system. Given the rotation matrix $R_t$ describing the orientation of the autonomous agent (with respect to the global coordinate system) at time $t$, define the space occupied by the autonomous agent as the set
\small\begin{align}\label{eq:ev_shape}
    \mathbb{S}_t(x_t)=\{z\in\mathbb{R}^n\ |\exists y\in\mathbb{R}^n:Gy\preceq_{\mathcal{K}}g,  z=R_ty+p_t \}
\end{align}\normalsize
where $p_t=Cx_t+c_t$,  $G\in\mathbb{R}^{l\times n}, g\in\mathbb{R}^{l}$ and $\mathcal{K}\subset\mathbb{R}^l$ is a closed convex cone with non-empty interior. The set $\{y| Gy\preceq_{\mathcal{K}}g\}$ is non-empty, convex and compact, and describes the space occupied by the un-oriented agent at the origin, and can denote various shapes for an appropriate choice of $\mathcal{K}$ (e.g., a polytope when $\mathcal{K}$ is the positive orthant or an ellipsoid when $\mathcal{K}$ is the second-order cone). 

Now suppose that there are $M$ obstacles, each described by the affine time-varying discrete-time dynamics
\small\begin{align}\label{eq:tv_dyn}
    o^i_{t+1}&=T_t^io_{t}^i+q^i_t+F^i_{t}n^i_{t},\nonumber\\ 
    p^i_{t}&=Co^i_{t},~~~~~~~~~~~~~~~~~~~\forall i\in\mathcal{I}_1^M
\end{align}\normalsize
 where $o^i_t\in\mathbb{R}^{n_x}, p^i_t\in\mathbb{R}^n, n^i_t\in\mathbb{R}^{n_x}$ are the state, position and
 process noise respectively and $T^i_t,q^i_t,F^i_t,C$ are the system matrices of the $i^{th}$ obstacle at time $t$. Now suppose that the orientation $R^i_t$ of the $i^{th}$ obstacle at time $t$ is given, and define the space occupied by the obstacle as the set
 \small\begin{align}\label{eq:tv_shape}
    \mathbb{S}^i_t(o^i_t)=\{z\in\mathbb{R}^n\ |\exists y\in\mathbb{R}^n:G^iy\preceq_{\mathcal{K}}g^i, z=R^i_ty+p^i_t\}
 \end{align}\normalsize
 where $p^i_t=C^i_to^i_t$ and the non-empty, convex and compact set $\{y| G^iy\preceq_{\mathcal{K}}g^i\}$ describes the un-oriented shape of the obstacle at the origin. We also introduce the notation $o_t=[o^{1\top}_t\dots, o^{M\top}_t]^\top, n_t=[n^{1\top}_t\dots, n^{M\top}_t]^\top$ to denote the stacked obstacle state and process noise vectors at time $t$, and $T_t=\text{blkdiag}(T_t^1,\dots,T_t^M), F_t=\text{blkdiag}(F_t^1,\dots,F_t^M), q_t=[q_t^{1\top}\dots q_t^{M\top}]^\top$ to define the combined dynamics of the obstacles as $o_{t+1}=T_to_t+q_t+F_tn_t$.
 \subsection{Uncertainty Description}
The presence of process noises $w_t$ and $n_t$ in the dynamics of the controlled agent \eqref{eq:ev_dyn} and the obstacles \eqref{eq:tv_dyn} adds uncertainty in the prediction of their state trajectories. In this paper, we consider three different descriptions of the process noise distributions as follows.
\begin{itemize}
    \item \textbf{D1}: The joint process noise  $[w^\top_t, n^\top_t]^\top$ are i.i.d. $\forall t\geq0$ and are given by an unknown distribution with compact support, \small$$[w_t^\top\ n^\top_t]^\top\in\mathcal{D}=\{d\in\mathbb{R}^{n_x+Mn_x}|\ ||\Gamma d||_\infty\leq \gamma\}$$\normalsize
    for $\gamma>0$ and non-singular $\Gamma$.
    \item \textbf{D2}: The joint process noise $[w^\top_t, n^\top_t]^\top$ are i.i.d. $\forall t\geq0$ and are given by the Gaussian distribution \small$$[w_t^\top\ n^\top_t]^\top\sim\mathcal{N}(0, \Sigma).$$\normalsize
    \item \textbf{D3}: The joint process noise $[w^\top_t, n^\top_t]^\top$ are i.i.d $\forall t\geq0$ and are given by an unknown distribution with known mean and covariance,
    \small\begin{align*}
        \mathbb{E}\left([w_t^\top\ n^\top_t]^\top\right)&=0,\\
        \mathbb{E}\left(([w_t^\top\ n^\top_t]^\top)([w_t^\top\ n^\top_t])\right)&=\Sigma.
    \end{align*} \normalsize
\end{itemize}
 \subsection{Model Predictive Control Formulation}
 \label{sec:mpc_approach}
We aim to design a state-feedback control $u_t=\pi(x_t,o_t)$ for the controlled agent such that it avoids collisions with the obstacles
while respecting polytopic state-input constraints given by $\mathcal{XU}=\{(x,u)\ | F_j^xx+F_j^uu\leq f_j,~\forall j\in\mathcal{I}_1^J\}$, where $F^x_j\in\mathbb{R}^{1\times n_x},F^u_j\in\mathbb{R}^{1\times n_u}, f_j\in\mathbb{R}~\forall j\in\mathcal{I}_1^J$. We propose to compute the feedback control using MPC, by solving the following finite-horizon constrained optimal control problem.
\small\begin{subequations}\label{opt:MPC_skeleton}
\begin{align}
\mathbf{OPT}_t(\mathbf{D}\in&\{\mathbf{D1},\mathbf{D2},\mathbf{D3}\}):\nonumber\\
 \min_{\substack{\boldsymbol{\theta}_t}}&\quad J_t(\mathbf{x}_t,\mathbf{u}_t)\label{opt:obj}\\
 \text{s.t. }&\quad x_{k+1|t}=A_{k}x_{k|t}+B_{k} u_{k|t}+E_kw_{k|t},\label{opt:EV_dyn}\\
&\quad o_{k+1|t}=T_ko_{k|t}+q_k+F_kn_{k|t},\label{opt:TV_dyn}\\
&\quad (\mathbf{w}_t, \mathbf{n}_t)\text{ given by }\mathbf{D},\label{opt:uncertainty_desc}\\
&\quad(\mathbf{x}_t,\mathbf{u}_t,\mathbf{o}_t)\in\mathcal{C}(\mathbf{D}),\label{opt:constr}\\
&\quad\mathbf{u}_t=\Pi_{\boldsymbol{\theta}_t}(\mathbf{x}_t,\mathbf{o}_t),\label{opt:gen_pol_class}\\
&\quad x_{t|t}=x_t,\ o_{t|t}=o_t,\label{opt:init}\\
&\quad\forall k\in\mathcal{I}_t^{t+N-1}\nonumber
\end{align}
\end{subequations}\normalsize
where $\mathbf{x}_t=[x^\top_{t|t},\dots$, $x^\top_{t+N|t}]^\top$ (similar notiation for $\mathbf{o}_t$) and  $\mathbf{u}_t=[u^\top_{t|t},\dots, u^\top_{t+N-1|t}]^\top$ (similar notation for $\mathbf{w}_t,\mathbf{n}_t$). The feedback control is given by the optimal solution of \eqref{opt:MPC_skeleton} as \small
\begin{align}\label{eq:MPC}u_t=\pi_{\mathrm{MPC}}(x_t,o_t)=u^\star_{t|t}\end{align}\normalsize where the feedback over the agent's and obstacles' states enter as \eqref{opt:init}.  The objective \eqref{opt:obj} penalizes deviation of the agent's trajectory from a desired reference. The obstacle avoidance constraints, and state-input constraints along the prediction horizon are summarised as $\mathcal{C}(\mathbf{D})$ in \eqref{opt:constr}, and depend on the uncertainty description assumed in \eqref{opt:uncertainty_desc}.  In \eqref{opt:gen_pol_class}, the control inputs $\mathbf{u}_t$ along the prediction horizon are given by a parameterized policy class that depends on predictions of the agent's and obstacles' trajectories. We solve \eqref{opt:MPC_skeleton} for the MPC \eqref{eq:MPC} in batch form by explicitly substituting for the equality constraints \eqref{opt:EV_dyn}, \eqref{opt:TV_dyn} and optimize over the policy parameters $\boldsymbol{\theta}_t$.  
 

\section{Collision Avoidance for Dynamic Obstacles with Uncertain Predictions}
\label{sec:MPC}

In this section, we detail the proposed MPC formulation for avoiding collisions with uncertain and dynamically moving obstacles. Section~\ref{ssec:pol} describes our choice of policy parameterization for \eqref{opt:gen_pol_class}. In~\ref{ssec:ca_dual}, we derive a continuous reformulation of the collision avoidance problem $\mathbb{S}_k(x_{k|t})\cap \mathbb{S}_k^i(o^i_{k|t})=\emptyset$, keeping $x_{k|t}, o^i_{k|t}$ fixed. Then we introduce the prediction uncertainties in $x_{k|t},o_{k|t}$, and derive deterministic reformulations of the collision avoidance constraints and state-input constraints for each uncertainty description in~\ref{ssec:det_reform}. Section~\ref{ssec:cost} describes the MPC cost function, and the MPC design is consolidated in~\ref{ssec:cvxMPC}.
\subsection{Policy Parameterization}\label{ssec:pol}
In~\ref{opt:gen_pol_class}, we use parameterised feedback policies $\Pi_{\boldsymbol{\theta}_t}(\mathbf{x}_t,\mathbf{o}_t)$ for the  control actions $\mathbf{u}_t$ (as in \eqref{opt:gen_pol_class}) along the prediction horizon. Consider the following input policy for time $k$,
\small\begin{align}\label{eq:policy}
    &u_{k|t}=h_{k|t}+\sum_{l=t}^{k-1}M_{l,k|t}w_{l|t} +K_{k|t}(o_{k|t}-\bar{o}_{k|t})
\end{align}\normalsize
which uses state feedback for the obstacles' states but affine disturbance feedback for feedback over the agent's states (cf. \cite{goulart2006optimization} for equivalence of state and disturbance feedback, \cite{balci2021covariance} for a recent application). The nominal states $\bar{o}_{k|t}$ are obtained as $\bar{o}_{k+1|t}=T_k\bar{o}_{k|t}+q_k~\forall k\in\mathcal{I}_t^{t+N-1}$, with $\bar{o}_{t|t}=o_t$. 

In \eqref{mat:AB},\eqref{mat:ER} of Appendix \ref{app:matrices}, we define the matrices $\mathbf{A}_t,\mathbf{B}_t,\mathbf{E}_t$ to express the agent's trajectory as a function of $(x_t,\mathbf{u}_t,\mathbf{w}_t)$ as $\mathbf{x}_t=\mathbf{A}_tx_t+\mathbf{B}_t\mathbf{u}_t+\mathbf{E}_t\mathbf{w}_t$. Similarly, the matrices $\mathbf{T}_t,\mathbf{q}_t,\mathbf{F}_t$ given by \eqref{mat:Tq},\eqref{mat:ER}, give the obstacles' trajectory as a function of $(o_t,\mathbf{n}_t)$ as $\mathbf{o}_t=\mathbf{T}_to_t+\mathbf{q}_t+\mathbf{R}_t\mathbf{n}_t$.
Defining $\mathbf{h}_t,\mathbf{M}_t,\mathbf{K}_t$ given by \eqref{mat:hK},\eqref{mat:M},
the control policies along the prediction horizon are $\mathbf{u}_t=\Pi_{\boldsymbol{\theta}_t}(\mathbf{x}_t,\mathbf{o}_t)=\mathbf{h}_t+\mathbf{M}_t\mathbf{w}_t+\mathbf{K}_t\mathbf{F}_t\mathbf{n}_t$, parameterized by $\boldsymbol{\theta}_t=(\mathbf{h}_t,\mathbf{M}_t,\mathbf{K}_t)$.

Note that although $\mathbf{o}_t$ doesn't necessarily depend on $\mathbf{x}_t$ ($\because \mathbf{n}_t$ may be independent from $\mathbf{w}_t$), the policies $\Pi_{\boldsymbol{\theta}_t}(\mathbf{x}_t,\mathbf{o}_t)$ modify the distribution of $\mathbf{x}_t$ in response to $\mathbf{o}_t$. Solving \eqref{opt:MPC_skeleton} over open-loop sequences (i.e., $\Pi_{\boldsymbol{\theta}_t}(\mathbf{x}_t,\mathbf{o}_t)=\mathbf{h}_t$) can be conservative because the agent-obstacle trajectories $(\mathbf{x}_t,\mathbf{o}_t)$ from a single control sequence $\mathbf{u}_t=\mathbf{h}_t$ must satisfy all the constraints regardless of the realizations of $\mathbf{w}_t, \mathbf{n}_t$.
\subsection{Collision Avoidance Constraint Reformulation by Dualization}\label{ssec:ca_dual}
Given the states $x_{k|t}$, $o^i_{k|t}$ of the agent and $i^{th}$ obstacle respectively, the collision avoidance constraint is given by $\mathbb{S}_k(x_{k|t})\cap\mathbb{S}^i_k(o^i_{k|t})=\emptyset$. This can be equivalently expressed as $\text{dist}(\mathbb{S}_k(x_{k|t}), \mathbb{S}^i_k(o^i_{k|t}))> 0$\footnote[1]{
In practice, we replace $>0$ with $\geq d_{min}$ for some small $d_{min}>0$.} where $\text{dist}(\mathbb{S}_k(x_{k|t}), \mathbb{S}^i_k(o^i_{k|t}))$ is the solution of the convex optimization problem
\small\begin{align}\label{eq:dist}
    \text{dist}(\mathbb{S}_k(x_{k|t}), \mathbb{S}^i_k(o^i_{k|t}))&=\min_{\substack{z_1\in\mathbb{S}_k(x_{k|t}), z_2\in\mathbb{S}^i_k(o^i_{k|t})}}||z_1-z_2||_2\nonumber\\
    &=\min_{z_1,z_2}~~~||z_1-z_2||_2\nonumber\\
    &~~~~~\text{ s.t}~~ GR^\top_k(z_1-p_{k|t})\preceq_{\mathcal{K}}g,\nonumber\\
    &~~~~~~~~~~~G^iR^{i\top}_k(z_2-p^i_{k|t})\preceq_{\mathcal{K}}g^i.
\end{align}\normalsize

In the following proposition, we use the above formulation \eqref{eq:dist} and Lagrange duality to express the set intersection problem $\mathbb{S}_k(x_{k|t})\cap\mathbb{S}^i_k(o^i_{k|t})=\emptyset$ as a convex feasibility problem of finding a separating hyperplane.
\begin{prop}\label{prop:oa_r}
Given the state and orientation of the agent $x_{k|t}, R_{k}$, and state and orientation of the $i^{th}$ obstacle $o^i_{k|t}, R^i_{k}$ at the $k^{th}$ prediction time step, we have
\small\begin{align}\label{eq:dual_oa}
    &\text{dist}(\mathbb{S}_k(x_{k|t}), \mathbb{S}^i_k(o^i_{k|t}))>0\nonumber\\
    \Leftrightarrow& \exists\lambda^i_{k|t},\nu^i_{k|t}\in\mathcal{K}^*:\ -\lambda^{i\top}_{k|t}(GR_k^\top (p_{k|t}-p^i_{k|t})+g)-\nu^{i\top}_{k|t}g^i>0,\nonumber\\ &\Vert\lambda^{i\top}_{k|t}GR_k^\top\Vert_2\leq1, \lambda^{i\top}_{k|t}GR^\top_k=-\nu^{i\top}_{k|t}G^iR^{i\top}_{k|t}.
\end{align}\normalsize
\end{prop}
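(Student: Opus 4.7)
The plan is to apply conic Lagrange duality to the convex distance-minimization problem \eqref{eq:dist} and then recognize the conditions in \eqref{eq:dual_oa} as the dual-feasibility set paired with strict positivity of the dual objective.

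The first step is to form the Lagrangian of \eqref{eq:dist} with multipliers $\lambda^i_{k|t},\nu^i_{k|t}\in\mathcal{K}^*$ for the two conic constraints and minimize over the primal variables $z_1,z_2$. The change of variables $u=z_1-z_2$ decouples the minimization into a linear part in $z_2$ plus a term $\|u\|_2 + \lambda^{i\top}_{k|t}GR_k^\top u$ in $u$. For the dual function to be finite, the coefficient of $z_2$ must vanish --- which is exactly the equality $\lambda^{i\top}_{k|t}GR_k^\top = -\nu^{i\top}_{k|t}G^iR^{i\top}_{k}$ in \eqref{eq:dual_oa} --- and the infimum in $u$ must be bounded below, which holds iff $\|\lambda^{i\top}_{k|t}GR_k^\top\|_2\le 1$ by the standard dual characterization of the $\ell_2$-norm. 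On the resulting dual-feasible set, the infimum over $u$ equals zero, and substituting the equality constraint to eliminate $\nu^{i\top}_{k|t}G^iR^{i\top}_k p^i_{k|t}$ reduces the dual objective to $-\lambda^{i\top}_{k|t}(GR_k^\top(p_{k|t}-p^i_{k|t})+g)-\nu^{i\top}_{k|t}g^i$, matching \eqref{eq:dual_oa}.

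The equivalence then follows from strong duality. For the $(\Leftarrow)$ direction, existence of dual-feasible $(\lambda^i_{k|t},\nu^i_{k|t})$ with strictly positive dual objective implies $\text{dist}(\mathbb{S}_k(x_{k|t}),\mathbb{S}^i_k(o^i_{k|t}))>0$ directly by weak duality. For the $(\Rightarrow)$ direction, I would invoke strong duality to assert that the dual optimum equals the primal optimum and is attained. This holds because \eqref{eq:dist} is a convex conic program for which Slater's condition is satisfied: the assumed nonemptiness and compactness of $\{y\mid Gy\preceq_{\mathcal{K}}g\}$ and $\{y\mid G^iy\preceq_{\mathcal{K}}g^i\}$, together with nonempty interior of $\mathcal{K}$, yield strictly feasible primal points $z_1\in\text{int}(\mathbb{S}_k(x_{k|t}))$, $z_2\in\text{int}(\mathbb{S}^i_k(o^i_{k|t}))$.

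The main obstacle I anticipate is the careful handling of the generalized inequality --- ensuring that Slater's condition is phrased correctly for cone constraints (strict feasibility in the sense of $g-GR_k^\top(z_1-p_{k|t})\in\text{int}(\mathcal{K})$), and that the partial minimization in the Lagrangian is bounded globally rather than only on neighborhoods of the primal optimum. Aside from these standard-but-subtle duality issues, the derivation is a routine conic-duality calculation analogous to the static-obstacle reformulation of \cite{zhang2020optimization}, extended here to rotated shape sets and the strict positive-distance condition.
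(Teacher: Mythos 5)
Your proposal is correct and follows essentially the same route as the paper: dualize \eqref{eq:dist}, identify dual feasibility with the norm bound, the cone memberships and the alignment equality, and conclude via weak duality for $(\Leftarrow)$ and Slater-based strong duality for $(\Rightarrow)$. The only difference is mechanical --- you compute the dual function by the change of variables $u=z_1-z_2$ and direct partial minimization, whereas the paper invokes the convex-conjugate composition rule $h^*(y)=\inf_{A^\top z=y}f^*(z)$ with $A=[I_n\ -I_n]$ --- and both yield the same expression.
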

\begin{proof}
Appendix~\ref{app:prop1proof}
\end{proof}
The feasibility of \eqref{eq:dual_oa} gives a separating hyperplane with normal {\small{$\mu=-\lambda^{i\top}_{k|t}GR_k^\top$}} for the sets {\small{$\mathbb{S}_k(x_{k|t}), \mathbb{S}^i_k(o^i_{k|t})$}}, as follows. For any {\small{$z_1\in\mathbb{S}_k(x_{k|t})$}}, {\small{$z_2\in\mathbb{S}^i_k(o^i_{k|t})$}} and {\small{$\lambda^{i}_{k|t},\nu^i_{k|t}\in\mathcal{K}^*$}} feasible for \eqref{eq:dual_oa}, we have {\small{$\lambda^{i\top}_{k|t}(g-GR^\top_k(z_1-p_{k|t}))\geq0$}}, {\small{$\nu^{i\top}_{k|t}(g^i-G^iR^{i\top}_k(z_1-p^i_{k|t}))\geq0$}}. Adding these two inequalities  and substituting for {\small{$\mu=-\lambda^{i\top}_{k|t}GR_k^\top$}}, we thus get {\small{$\mu^\top z_1-\mu^\top z_2\geq -\lambda^{i\top}_{k|t}(GR_k^\top (p_{k|t}-p^i_{k|t})+g)-\nu^{i\top}_{k|t}g^i>0 $}}. 

Next, we reformulate \eqref{eq:dual_oa} to address the non-determinism arising from the uncertainty in positions $p_{k|t}=Cx_{k|t}+c_k$, $p^i_{k|t}=Co^i_{k|t}$ along the prediction horizon due to $\mathbf{w}_t, \mathbf{n}_t$.
\subsection{Deterministic Constraint Reformulation}\label{ssec:det_reform}
Deterministic reformulations for the collision avoidance constraints \eqref{eq:dual_oa} together with the state-input constraints $\mathcal{XU}=\{(x,u)\ | F_j^xx+F^u_ju\leq f_j~\forall j\in\mathcal{I}_1^J\}$ for the state predictions $\mathbf{x}_t,\mathbf{o}_t$ in closed-loop with \eqref{eq:policy}, is presented next for each uncertainty description: $\mathbf{D1}$, $\mathbf{D2}$ and $\mathbf{D3}$.  

We introduce the constant matrices $S^x_k, S^u_k, S^{o,i}_k$ such that $S^x_k\mathbf{x}_t=x_{k|t}$, $S^u_k\mathbf{u}_t=u_{k|t}$, and $S^{o,i}_k\mathbf{o}_t=o^i_{k|t}$. Let $\mathbf{P}$ be a permutation matrix such that $[\mathbf{w}^\top_t\ \mathbf{n}_t^\top]^\top=\mathbf{P}\mathbf{v}_t$ where \small$\mathbf{v}_t=[w^\top_{t|t}\ n^\top_{t|t}\dots w^\top_{t+N-1|t}\ n^\top_{t+N-1|t}]^\top$\normalsize . Also, define \small$\lambda_{k|t}=[\lambda^{1\top}_{k|t}\dots,\lambda^{M\top}_{k|t}]^\top$, $\boldsymbol{\lambda}_{t}=[\lambda^{\top}_{t+1|t}\dots,\lambda^{\top}_{t+N|t}]^\top$\normalsize\ (similarly for $\nu_{k|t}$ , $\boldsymbol{\nu}_{t}$). Given a sequence of noise realisations $(\mathbf{w}_t,\mathbf{n}_t)$, define the set of feasible agent-obstacles joint realizations $(\mathbf{x}_t,\mathbf{u}_t,\mathbf{o}_t)$ in the lifted-space $(\mathbf{x}_t,\mathbf{u}_t,\mathbf{o}_t,\boldsymbol{\lambda}_t,\boldsymbol{\nu}_t)$ as 
\small\begin{align}
 \mathcal{S}_t(\mathbf{w}_t,\mathbf{n}_t)=\left\{\begin{bmatrix}\mathbf{x}_t\\\mathbf{u}_t\\\mathbf{o}_t\\
    \boldsymbol{\lambda}_{t}\\\boldsymbol{\nu}_{t}\end{bmatrix}\middle\vert\small{\begin{aligned}&\{\text{Collision avoidance constraints}\}\\
     &\lambda^i_{k|t},\nu^i_{k|t}\in\mathcal{K}^*,\Vert\lambda^{i\top}_{k|t}GR_{k}^\top\Vert_2\leq 1,\\
     &\lambda^{i\top}_{k|t}GR_{k}^\top=- \nu^{i\top}_{k|t}G^iR^{i\top}_{k},\\
     &\lambda^{i\top}_{k|t}(GR_{k}^\top(C(S^x_{k}\mathbf{x}_t-S^{o,i}_k\mathbf{o}_t)+c_t)+g)\\
     &<-\nu^{i\top}_{k|t}g_i,~\forall k\in\mathcal{I}_{t+1}^{t+N}, \forall i\in\mathcal{I}_1^M\\
     &\{\text{State-input constraints}\}\\
     &F_j^xS^x_{k+1}\mathbf{x}_{t}+F_j^uS^u_k\mathbf{u}_{t}\leq f_j,\\~&\forall k\in\mathcal{I}_{t}^{t+N-1}, \forall i\in\mathcal{I}_1^J\\
     &\{\text{Agent \& obstacles' predictions}\}\\
     &\mathbf{x}_t=\mathbf{A}_tx_t+\mathbf{B}_t\mathbf{u}_t+\mathbf{E}_t\mathbf{w}_t,\\&\mathbf{o}_t=\mathbf{T}_to_t+\mathbf{q}_t+\mathbf{F}_t\mathbf{n}_t
    \end{aligned}}\right\}
\end{align}\normalsize
We now express the reformulations for the considered uncertainty descriptions using this set.
\subsubsection{Robust Formulation for Uncertainty Description \textbf{D1}}
 We seek to tighten the obstacle avoidance constraints, and state-input constraints to find $\mathbf{u}_t$ such that the tuple $(\mathbf{x}_t,\mathbf{u}_t,\mathbf{o}_t)$ satisfies the aforementioned constraints for all realisations of $[w^\top_{k|t}\ n^\top_{k|t}]^\top\in\mathcal{D}$, $\forall k\in\mathcal{I}_t^{t+N-1}$. We can write this formally as
\small\begin{align}\label{constr:robust}
    \mathcal{C}(\mathbf{D1})=\bigcap\limits_{\substack{\forall\mathbf{P}[\mathbf{w}^\top_t\ \mathbf{n}^\top_t]^\top\in\mathcal{D}^N}}\mathcal{S}_t(\mathbf{w}_t,\mathbf{n}_t)
 \end{align}\normalsize
 \vskip -0.005 true in
 where $\mathcal{D}^N=\{\mathbf{d}|\ \Vert\mathbf{\Gamma}\mathbf{d}\Vert_\infty\leq\gamma\}$, $\mathbf{\Gamma}=I_N\otimes\Gamma$.
 \subsubsection{Chance Constraint Formulation for Uncertainty Description \textbf{D2}}
For uncertainty description $\mathbf{D2}$, we have that $[w^\top_t, n^\top_t]^\top\sim\mathcal{N}(0,\Sigma)$, i.i.d. $\forall t\geq 0$. Since the uncertainties now have unbounded support, we adopt a chance constrained formulation, where for some $0<\epsilon<<1$, we find $\mathbf{u}_t$ such that the tuple $(\mathbf{x}_t,\mathbf{u}_t,\mathbf{o}_t, \boldsymbol{\lambda}_t,\boldsymbol{\nu}_t)$ satisfies the obstacle avoidance constraints \eqref{eq:dual_oa} and state-input constraints with probability greater than $1-\epsilon$, given that $[w^\top_{k|t}\ n^\top_{k|t}]^\top\sim\mathcal{N}(\mu,\Sigma)$, $\forall k\in\mathcal{I}_t^{t+N-1}$. Formally, we write this set as
\small\begin{align}\label{constr:stochastic}
     \mathcal{C}(\mathbf{D2})=\left\{\begin{bmatrix}\mathbf{x}_t\\\mathbf{u}_t\\\mathbf{o}_t\\
    \boldsymbol{\lambda}_{t}\\\boldsymbol{\nu}_{t}\end{bmatrix}\middle\vert\mathbb{P}\left(\begin{bmatrix}\mathbf{x}_t\\\mathbf{u}_t\\\mathbf{o}_t\\
    \boldsymbol{\lambda}_{t}\\\boldsymbol{\nu}_{t}\end{bmatrix}\in\mathcal{S}_t(\mathbf{w}_t,\mathbf{n}_t)\right)
     \geq1-\epsilon\right\}
 \end{align}\normalsize
 where the probability measure $\mathbb{P}(\cdot)$ is over $\mathbf{v}_t=\mathbf{P}^\top[\mathbf{w}^\top_t\ \mathbf{n}^\top_t]^\top$, and constructed as the product measure of $N$ i.i.d. Gaussian distributions $\mathcal{N}(0,\Sigma)$. 
  \subsubsection{Distributionally Robust Formulation for Uncertainty Description \textbf{D3}}
For uncertainty description $\mathbf{D3}$, we have that $[w^\top_t, n^\top_t]^\top$ are i.i.d. $\forall t\geq 0$ and have known mean and covariance, $\mathbb{E}([w^\top_t, n^\top_t]^\top)=0$, $\mathbb{E}([w^\top_t, n^\top_t]^\top[w^\top_t, n^\top_t])=\Sigma$. Denote the mean and covariance of the stacked random variables $\mathbf{v}_t$ as $\boldsymbol{0}=[0^\top\dots,0^\top]^\top$, $\mathbf{\Sigma}=I_N\otimes\Sigma$. Now define the \textit{ambiguity set}\cite{rahimian2019distributionally} as
\small$$\mathcal{P}=\{\text{Probability distributions with }\mathbb{E}(\mathbf{v}_t)=\boldsymbol{0},\mathbb{E}(\mathbf{v}_t\mathbf{v}_t^\top)=\mathbf{\Sigma} \}.$$\normalsize 
We adopt a distributionally robust, chance constrained formulation, where for some $0<\epsilon<<1$, we find $\mathbf{u}_t$ such that the tuple $(\mathbf{x}_t,\mathbf{u}_t,\mathbf{o}_t, \boldsymbol{\lambda}_t,\boldsymbol{\nu}_t)$ satisfies the obstacle avoidance constraints \eqref{eq:dual_oa} and state-input constraints with probability greater than $1-\epsilon$, for all probability distributions in $\mathcal{P}$. Formally, we write this set as
\small\begin{align}\label{constr:dist_robust}
     \mathcal{C}(\mathbf{U3})=\left\{\begin{bmatrix}\mathbf{x}_t\\\mathbf{u}_t\\\mathbf{o}_t\\
    \boldsymbol{\lambda}_{t}\\\boldsymbol{\nu}_{t}\end{bmatrix}\middle\vert\inf_{\substack{P\in\mathcal{P}\\\mathbf{v}_t\sim P}}\mathbb{P}\left(\begin{bmatrix}\mathbf{x}_t\\\mathbf{u}_t\\\mathbf{o}_t\\
    \boldsymbol{\lambda}_{t}\\\boldsymbol{\nu}_{t}\end{bmatrix}\in\mathcal{S}_t(\mathbf{w}_t,\mathbf{n}_t)\right)
     \geq1-\epsilon\right\}.
 \end{align}\normalsize
 The next theorem provides deterministic reformulations of the constraint sets presented above, and establishes the feasible set of \eqref{opt:MPC_skeleton} in terms of the policy parameters $\boldsymbol{\theta}_t=(\mathbf{h}_t,\mathbf{M}_t,\mathbf{K}_t)$ in \eqref{eq:policy} and Lagrange multipliers $\boldsymbol{\lambda}_t,\boldsymbol{\nu}_t$ in \eqref{eq:dual_oa}.
 \begin{thm}\label{thm:constr_det_r}
 For the agent \eqref{eq:ev_dyn} in closed-loop with policy \eqref{eq:policy} and obstacles modelled by \eqref{eq:tv_dyn}, define the following functions {\small{$\forall k\in\mathcal{I}^{t+N}_t, \forall i\in\mathcal{I}_1^M, \forall j\in\mathcal{I}_1^J$}} in the dual variables $(\boldsymbol{\lambda}_t, \boldsymbol{\nu}_t)$ and policy parameters $\boldsymbol{\theta}_t=(\mathbf{h}_t,\mathbf{M}_t,\mathbf{K}_t)$:
 \vskip -0.1 true in
 \small
 \begin{align}
 &Y^i_{k|t}(\boldsymbol{\theta}_t,\lambda^i_{k|t},\nu^i_{k|t})=-\lambda^{i\top}_{k|t}g-\nu^{i\top}_{k|t}g^i-\lambda^{i\top}_{k|t}GR_k^\top c_t\nonumber\\&~~~~-\lambda^{i\top}_{k|t}GR_k^\top C(S^x_k(\mathbf{A}_t x_t+\mathbf{B}\mathbf{h}_t)-S^{o,i}_k(\mathbf{T}_to_t+\mathbf{q}_t)),\label{eq:Yik}\\
 &Z^i_{k|t}(\boldsymbol{\theta}_t,\lambda^i_{k|t},\nu^i_{k|t})=\lambda^{i\top}_{k|t}GR_k^\top C\begin{bmatrix}(S^x_k(\mathbf{B}_t\mathbf{M}_t+\mathbf{E}_t))^\top\\\mathbf{F}_t^\top(S^x_k\mathbf{B}_t\mathbf{K}_t-S^{o,i}_k)^\top\end{bmatrix}^\top\label{eq:Zik}\\
 &\bar{Y}^j_{k|t}(\boldsymbol{\theta}_t)=f_j-F_j^x S^x_{k+1}(\mathbf{A}_tx_t+\mathbf{B}\mathbf{h}_t)-F_j^u S^u_k\mathbf{h}_t,\\
 &\bar{Z}^j_{k|t}(\boldsymbol{\theta}_t)=\begin{bmatrix}(F_j^x S^x_{k+1}(\mathbf{B}_t\mathbf{M}_t+\mathbf{E}_t)+F_j^u S^u_k\mathbf{M}_t)^\top\\ ((F_j^x S^x_{k+1}+F_j^u S^{u}_k)\mathbf{B}_t\mathbf{K}_t\mathbf{F}_t)^\top\end{bmatrix}^\top,
 \end{align}
 \normalsize
 where all other quantities in the above expressions are constants defined earlier in the paper.
 Then deterministic reformulations of the feasible sets of \eqref{opt:MPC_skeleton} are given as follows:
 \begin{enumerate}
     \item For uncertainty description $\mathbf{D1}$ with support parameterised by $(\Gamma, \gamma)$, the feasible set is given by
     \footnotesize 
     \begin{align}  \mathcal{F}_t(\mathbf{D1})=\left\{\left(\begin{aligned}\mathbf{h}_t\\\mathbf{M}_t\\\mathbf{K}_t\\\boldsymbol{\lambda}_t\\\boldsymbol{\nu}_t\end{aligned}\right)\middle\vert\begin{aligned}
     \forall &k\in\mathcal{I}_{t}^{t+N-1}, \forall i\in\mathcal{I}_1^M,\forall j\in\mathcal{I}_1^J:\\
     &\lambda^i_{k+1|t},\nu^i_{k+1|t}\in\mathcal{K}^*,\Vert\lambda^{i\top}_{k+1|t}GR_{k+1}^\top\Vert_2\leq 1,\\
     &\lambda^{i\top}_{k+1|t}GR_{k+1}^\top=- \nu^{i\top}_{k+1|t}G^iR^{i\top}_{k+1},\\
     &Y^i_{k+1|t}(\boldsymbol{\theta}_t,\lambda^i_{k+1|t},\nu^i_{k+1|t})>\\
     &\gamma\Vert Z^i_{k+1|t}(\boldsymbol{\theta}_t,\lambda^i_{k+1|t},\nu^i_{k+1|t})\mathbf{P}\mathbf{\Gamma}^{-1}\Vert_1,\\
     &\bar{Y}^j_{k|t}(\boldsymbol{\theta}_t)-\gamma\Vert \bar{Z}^j_{k|t}(\boldsymbol{\theta}_t)\mathbf{P}\mathbf{\Gamma}^{-1}\Vert_1\geq 0
     \end{aligned}\right\}\label{fs:rob}
     \end{align}
     \normalsize
     \item For uncertainty description $\mathbf{D2}$ given by $\mathcal{N}(0,\Sigma)$ and risk level $\epsilon$, the feasible set is inner-approximated by
     \footnotesize 
     \begin{align}
       \mathcal{F}_t(\mathbf{D2})=\left\{\left(\begin{aligned}\mathbf{h}_t\\\mathbf{M}_t\\\mathbf{K}_t\\\boldsymbol{\lambda}_t\\\boldsymbol{\nu}_t\end{aligned}\right)\middle\vert\begin{aligned}
     \forall &k\in\mathcal{I}_{t}^{t+N-1}, \forall i\in\mathcal{I}_1^M,\forall j\in\mathcal{I}_1^J:\\
     &\lambda^i_{k+1|t},\nu^i_{k+1|t}\in\mathcal{K}^*,\Vert\lambda^{i\top}_{k+1|t}GR_{k+1}^\top\Vert_2\leq 1,\\
     &\lambda^{i\top}_{k+1|t}GR_{k+1}^\top=- \nu^{i\top}_{k+1|t}G^iR^{i\top}_{k+1},\\
     &Y^i_{k+1|t}(\boldsymbol{\theta}_t,\lambda^i_{k+1|t},\nu^i_{k+1|t})>\\
     &\gamma_{ca}\Vert Z^i_{k+1|t}(\boldsymbol{\theta}_t,\lambda^i_{k+1|t},\nu^i_{k+1|t})\mathbf{P}\mathbf{\Sigma}^{\frac{1}{2}}\Vert_2,\\
     &\bar{Y}^j_{k|t}(\boldsymbol{\theta}_t)\geq\gamma_{xu}\Vert \bar{Z}^j_{k|t}(\boldsymbol{\theta}_t)\mathbf{P}\mathbf{\Sigma}^{\frac{1}{2}}\Vert_2
     \end{aligned}\right\}\label{fs:stoc}
     \end{align}
     \normalsize
     where $\gamma_{ca}=\Phi^{-1}(1-\frac{\epsilon}{2NM})$ and $\gamma_{xu}=\Phi^{-1}(1-\frac{\epsilon}{2NJ})$, $\Phi^{-1}(\cdot)$ being the percentile function of the standard normal distribution.
     \item For uncertainty description $\mathbf{D3}$ parameterised by the covariance $\Sigma$ and risk level $\epsilon$, the inner-approximation of the feasible set, $\mathcal{F}_t(\mathbf{D3})$, is the same as $\mathcal{F}_t(\mathbf{D2})$, except with $\gamma_{ca}=\sqrt{\frac{2NM-\epsilon}{\epsilon}}$ and $\gamma_{xu}=\sqrt{\frac{2NJ-\epsilon}{\epsilon}}$.
 \end{enumerate}
 \end{thm}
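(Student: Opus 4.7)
The plan is to reduce the three feasible set descriptions to a common affine-in-$\mathbf{v}_t$ template and then apply case-specific probabilistic/robust tools. First, I substitute the policy parameterization $\mathbf{u}_t = \mathbf{h}_t + \mathbf{M}_t\mathbf{w}_t + \mathbf{K}_t\mathbf{F}_t\mathbf{n}_t$ together with the batch dynamics $\mathbf{x}_t = \mathbf{A}_t x_t + \mathbf{B}_t\mathbf{u}_t + \mathbf{E}_t\mathbf{w}_t$ and $\mathbf{o}_t = \mathbf{T}_t o_t + \mathbf{q}_t + \mathbf{F}_t\mathbf{n}_t$ into the scalar inequalities that define $\mathcal{S}_t(\mathbf{w}_t,\mathbf{n}_t)$. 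A direct computation, separating terms that depend on $\mathbf{v}_t = \mathbf{P}^\top[\mathbf{w}_t^\top\ \mathbf{n}_t^\top]^\top$ from those that do not, shows that each collision avoidance inequality rewrites as $Y^i_{k+1|t}(\boldsymbol{\theta}_t,\lambda^i_{k+1|t},\nu^i_{k+1|t}) > Z^i_{k+1|t}(\boldsymbol{\theta}_t,\lambda^i_{k+1|t},\nu^i_{k+1|t})\mathbf{P}\mathbf{v}_t$, and each state-input inequality as $\bar{Y}^j_{k|t}(\boldsymbol{\theta}_t) \geq \bar{Z}^j_{k|t}(\boldsymbol{\theta}_t)\mathbf{P}\mathbf{v}_t$. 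The cone-membership and equality constraints on $(\lambda^i_{k+1|t},\nu^i_{k+1|t})$ do not involve $\mathbf{v}_t$ and carry over verbatim.

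For case (1), since $\mathbf{v}_t$ ranges over the box $\{\mathbf{v}:\|\mathbf{\Gamma}\mathbf{v}\|_\infty \leq \gamma\}$, I change variables via $\mathbf{d} = \mathbf{\Gamma}\mathbf{v}_t$ and apply the H\"older/LP-duality identity $\max_{\|\mathbf{d}\|_\infty\leq\gamma} a^\top\mathbf{d} = \gamma\|a\|_1$ to each row vector $a = Z^i_{k+1|t}\mathbf{P}\mathbf{\Gamma}^{-1}$ or $a = \bar{Z}^j_{k|t}\mathbf{P}\mathbf{\Gamma}^{-1}$; intersecting the resulting half-space robust counterparts yields \eqref{fs:rob} exactly, since the robust counterpart of an intersection of affine half-spaces equals the intersection of the individual robust counterparts. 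For cases (2) and (3), I first relax the single joint chance constraint in \eqref{constr:stochastic} and \eqref{constr:dist_robust} through Boole's (union) inequality, splitting the risk $\epsilon$ as $\epsilon/2$ across the $NM$ collision constraints and $\epsilon/2$ across the $NJ$ state-input constraints; this is what makes the resulting feasible set an \emph{inner} approximation. For each scalar constraint of the form $Y \geq Z\mathbf{P}\mathbf{v}_t$, I exploit that $Z\mathbf{P}\mathbf{v}_t$ has mean $0$ and standard deviation $\|Z\mathbf{P}\mathbf{\Sigma}^{1/2}\|_2$ whenever $\mathbf{v}_t$ has covariance $\mathbf{\Sigma} = I_N\otimes\Sigma$. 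In case (2), this random variable is Gaussian, so the textbook individual chance constraint reformulation $Y \geq \Phi^{-1}(1-\epsilon')\,\|Z\mathbf{P}\mathbf{\Sigma}^{1/2}\|_2$ with $\epsilon' \in \{\epsilon/(2NM),\epsilon/(2NJ)\}$ produces $\gamma_{ca}$ and $\gamma_{xu}$. In case (3), the ambiguity set $\mathcal{P}$ only fixes the first two moments of $\mathbf{v}_t$, so I invoke the tight one-sided Chebyshev (Cantelli) bound for distributionally robust individual chance constraints with zero mean, yielding the multiplier $\sqrt{(1-\epsilon')/\epsilon'} = \sqrt{(2NM-\epsilon)/\epsilon}$ for collision avoidance, and the analogous expression for state-input.

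The main obstacle I expect is the bookkeeping around $\mathbf{P}$, $\mathbf{\Gamma}$, and $\mathbf{\Sigma}^{1/2}$ when verifying that the coefficient functions $Y^i_{k|t}, Z^i_{k|t}, \bar{Y}^j_{k|t}, \bar{Z}^j_{k|t}$ in \eqref{eq:Yik}--\eqref{eq:Zik} really do match the decomposition of the substituted constraints; the arguments for $\mathbf{D1}$/$\mathbf{D2}$/$\mathbf{D3}$ are individually standard, but correctly tracking the indexing into $\mathbf{v}_t$ via the permutation $\mathbf{P}$ and confirming that the block-diagonal structure of $\mathbf{\Sigma}$ interacts properly with the per-timestep coefficient matrices $S^x_k, S^{o,i}_k, S^u_k, \mathbf{B}_t, \mathbf{E}_t, \mathbf{F}_t$ requires careful matrix algebra. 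A secondary subtlety is justifying the union-bound split as an inner approximation rather than an exact equivalence, and verifying that the Cantelli bound is in fact tight on $\mathcal{P}$ so that case (3) cannot be further improved without additional moment information.
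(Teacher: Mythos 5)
Your proposal matches the paper's own proof: the same decomposition of $\mathcal{S}_t(\mathbf{w}_t,\mathbf{n}_t)$ into per-timestep, per-obstacle affine-in-$\mathbf{v}_t$ inequalities with coefficients $Y^i_{k|t},Z^i_{k|t},\bar{Y}^j_{k|t},\bar{Z}^j_{k|t}$, the same dual-norm ($\ell_\infty$/$\ell_1$) argument for $\mathbf{D1}$, and the same Boole-inequality risk split followed by the Gaussian quantile reformulation for $\mathbf{D2}$ and the Cantelli-type distributionally robust bound for $\mathbf{D3}$ (both cited to Calafiore--El Ghaoui in the paper). The approach and all key steps are correct and essentially identical to the paper's.
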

 \begin{proof} Appendix~\ref{app:thm1proof}
 \end{proof}
 The constraint sets characterised in Theorem~\ref{thm:constr_det_r} are non-convex because the terms $Y^i_{k|t}(\cdot), Z^i_{k|t}(\cdot)~\forall k\in\mathcal{I}_{t+1}^{t+N}, i\in\mathcal{I}_1^M$ given by \eqref{eq:Yik}, \eqref{eq:Zik} are bilinear in $\boldsymbol{\lambda}_t,\boldsymbol{\theta}_t$. In \ref{ssec:cvxMPC}, we propose a convex approximation of these constraint sets.
 \subsection{Cost function}\label{ssec:cost}
A convex, quadratic cost is used for \eqref{opt:obj} to penalise deviations of the agent's nominal (certainty-equivalent) state-input trajectories $\bar{\mathbf{x}}_t=\mathbf{A}_tx_t+\mathbf{B}_t\mathbf{h}_t$, $\bar{\mathbf{u}}_t=\mathbf{h}_t$, from a given reference trajectory $\mathbf{x}^{\text{ref}}_t=[x^{\text{ref}\top}_t,\dots,x^{\text{ref}\top}_{t+N}]^\top$, $\mathbf{u}^{\text{ref}}_t=[u^{\text{ref}\top}_t,\dots,u^{\text{ref}\top}_{t+N-1}]^\top$ and with $\mathbf{Q}, \mathbf{R}\succ 0$,
\small
\begin{align}\label{eq:SMPC_cost}
    J_t(\bar{\mathbf{x}}_t,\bar{\mathbf{u}}_t
    )=&(\mathbf{x}^{\text{ref}}_t-\bar{\mathbf {x}}_{t})^\top \mathbf{Q}(\mathbf{x}^{\text{ref}}_t-\bar{\mathbf{x}}_{t})+(\mathbf{u}^{\text{ref}}_t-\bar{\mathbf {u}}_{t})^\top \mathbf{R}(\mathbf{u}^{\text{ref}}_t-\bar{\mathbf{u}}_{t}).
\end{align}
\normalsize
 \subsection{Convexified MPC Formulation}\label{ssec:cvxMPC}
We linearize  $Y^i_{k|t}(\cdot), Z^i_{k|t}(\cdot)~\forall k\in\mathcal{I}_{t+1}^{t+N}, i\in\mathcal{I}_1^M$ from \eqref{eq:Yik}, \eqref{eq:Zik} for time $t$, about the previous solution $\boldsymbol{\theta}^*_{t-1}, \boldsymbol{\lambda}^*_{t-1}$ to get affine functions $\mathcal{L}Y^i_{k|t}(\cdot), \mathcal{L}Z^i_{k|t}(\cdot)$ given by
\small
\begin{align*}
    \mathcal{L}Y^i_{k|t}(\boldsymbol{\theta}_t,\lambda^i_{k|t},\nu^i_{k|t})=&Y^i_{k|t}(\boldsymbol{\theta}_t,\lambda^{i*}_{k-1|t-1},\nu^{i}_{k|t})\\+&Y^i_{k|t}(\boldsymbol{\theta}^*_{t-1},\lambda^{i}_{k|t}-\lambda^{i*}_{k-1|t-1},\nu^{i}_{k|t})\\
    \mathcal{L}Z^i_{k|t}(\boldsymbol{\theta}_t,\lambda^i_{k|t},\nu^i_{k|t})=&Z^i_{k|t}(\boldsymbol{\theta}_t,\lambda^{i*}_{k-1|t-1},\nu^{i}_{k|t})\\+&Z^i_{k|t}(\boldsymbol{\theta}^*_{t-1},\lambda^{i}_{k|t}-\lambda^{i*}_{k-1|t-1},\nu^{i}_{k|t})
\end{align*}
\normalsize
When $\gamma>0$, $\epsilon<\min\{NM,NJ\}$, the constraints
\small
\begin{align*}
    &\mathcal{L}Y^i_{k|t}(\boldsymbol{\theta}_t,\lambda^i_{k|t},\nu^i_{k|t})>\gamma\Vert \mathcal{L}Z^i_{k|t}(\boldsymbol{\theta}_t,\lambda^i_{k|t},\nu^i_{k|t})\mathbf{P}\mathbf{\Gamma}^{-1}\Vert_1,\\
    &\mathcal{L}Y^i_{k|t}(\boldsymbol{\theta}_t,\lambda^i_{k|t},\nu^i_{k|t})>\gamma_{ca}\Vert \mathcal{L}Z^i_{k|t}(\boldsymbol{\theta}_t,\lambda^i_{k|t},\nu^i_{k|t})\mathbf{P}\mathbf{\Sigma}^{\frac{1}{2}}\Vert_2
\end{align*}
\normalsize
are second-order cone (SOC) representable (LP representable in the first case)  because the composition of a SOC constraint with an affine map is still an SOC constraint.  Substituting these affine functions in the set definitions of {\small{$\mathcal{F}_t(\mathbf{D}_1),\mathcal{F}_t(\mathbf{D}_2),\mathcal{F}_t(\mathbf{D}_3)$}}, the resulting constraint sets {\small{$\Tilde{\mathcal{F}}_t(\mathbf{D}_1),\Tilde{\mathcal{F}}_t(\mathbf{D}_2),\Tilde{\mathcal{F}}_t(\mathbf{D}_3)$}} are convex. This follows from 1) convexity of the dual cone $\mathcal{K}^*$, 2) convexity of constraints {\small{$\Vert\lambda^{i\top}_{k|t}GR_k^\top\Vert_2\leq 1, \lambda^{i\top}_{k|t}GR_k^\top=- \nu^{i\top}_{k|t}G^iR^{i\top}_k$}}, and 3) convexity of constraints {\small{$\bar{Y}^j_{k|t}(\boldsymbol{\theta}_t)\geq\gamma\Vert \bar{Z}^j_k(\boldsymbol{\theta}_t)\mathbf{P}\mathbf{\Gamma}^{-1}\Vert_1$}}, {\small{$\bar{Y}^j_{k|t}(\boldsymbol{\theta}_t)\geq\gamma_{xu}\Vert \bar{Z}^j_k(\boldsymbol{\theta}_t)\mathbf{P}\mathbf{\Sigma}^{\frac{1}{2}}\Vert_2$}}  due to composition of SOC constraint with affine maps {\small{$\bar{Y}^j_{k|t}(\cdot), \bar{Z}^j_{k|t}(\cdot)$}}.
 The resulting optimization problem for our MPC for either of the uncertainty descriptions $\mathbf{D1}$, $\mathbf{D2}$ or $\mathbf{D3}$ is given by the convex optimization problem: 
\small
\begin{align}\label{opt:MPC_final}
\mathbf{OPT}^{CVX}_t(\mathbf{D}\in&\{\mathbf{D1},\mathbf{D2},\mathbf{D3}\}):\nonumber\\
    \min\limits_{\substack{\{\mathbf{h}_t,\mathbf{K}_t,\mathbf{M}_t,\boldsymbol{\lambda}_t,\boldsymbol{\nu}_t\}}}&~~J_t(\bar{\mathbf{x}}_t,\bar{\mathbf{u}}_t)\nonumber\\
    \text{s.t }&~ \bar{\mathbf{x}}_t=\mathbf{A}_tx_t+\mathbf{B}_t\mathbf{h}_t,~\bar{\mathbf{u}}_t=\mathbf{h}_t,\nonumber\\
    &~\{\mathbf{h}_t,\mathbf{K}_t,\mathbf{M}_t,\boldsymbol{\lambda}_t,\boldsymbol{\nu}_t\}\in\Tilde{\mathcal{F}}_t(\mathbf{D}).
\end{align}
\normalsize 
When the cone $\mathcal{K}$ is given by the positive orthant (for polytopic shapes) or the second-order cone (for ellipsoidal shapes), the optimization problem \eqref{opt:MPC_final} is given by a second-order cone program which can be efficiently solved. 
The optimal solution to \eqref{opt:MPC_final} is used to obtain the control action $u^*_{t|t}$ given by \eqref{eq:policy}.
\begin{rem}
The feasible set of \eqref{opt:MPC_final} is not a convex inner-approximation of the original problem with {\small{$\mathcal{F}_t(\mathbf{D})$}} from Theorem~\ref{thm:constr_det_r}. However, at the cost of introducing several new variables, a convex-inner approximation can be obtained by enforcing the collision avoidance constraints for all points in the convex relaxation of the bilinear equalities \eqref{eq:Yik}, \eqref{eq:Zik} given by McCormick envelopes\cite{mccormick1976computability}. An investigation along these lines is left for future research.
\end{rem}

\section{Simulations}
\label{sec:results}
In this section, we demonstrate our MPC formulation via two numerical examples\footnote[2]{Experiments were run on a computer with a Intel i9-9900K CPU, 32 GB RAM, and a RTX 2080 Ti GPU.} of a traffic intersection: 1) A longitudinal control example comparing the MPC formulations for each uncertainty description $\mathbf{D}\in\{\mathbf{D1},\mathbf{D2},\mathbf{D3}\}$, and 2) An unprotected left turn in CARLA, comparing the proposed approach against \cite{nair2021stochastic} to highlight the benefit of the proposed collision avoidance formulation.
\subsection{Longitudinal Control Example}\label{ssec:long}
\subsubsection{Models and Geometry}
We simulate an autonomous vehicle as the controlled agent and $M=2$ surrounding vehicles as obstacles at a traffic intersection as in Figure~\ref{fig:low_sim}. The vehicles' are modelled as $4.8m\times 2.8m$ rectangles, and their dynamics are given by Euler-discretized double integrator dynamics with $\Delta t=0.1s$,  states: [$s$ (longitudinal position, $v$ (speed)], control input: $a$ (acceleration) and keep the lateral coordinate constant. For obstacle predictions \eqref{eq:tv_dyn}, we use forecasts  of the acceleration inputs for each obstacle. 
\subsubsection{Process Noise Distribution}
We model $[w^\top_t\ n^\top_t]^\top$ as a product of 6 (2 for agent, 2$\times$2 for obstacles) independent uni-variate truncated normal random variables (cf. \cite{wang2020non,de2021scenario}),  $\text{truncNorm}(\mu,\sigma,a,b)$, with $\mu=0$, $a=-2$, $b=2$ and $\sigma=0.01$ for $w_t$, $\sigma=0.1$ for $n_t$.
The resulting distribution for $[w^\top_t\ n^\top_t]^\top$ has mean $0$, variance $\Sigma=7.7\cdot\text{blkdiag}(10^{-5}I_{2\times 2}, 10^{-3}I_{4\times 4})$ and support $\mathcal{D}=([-0.02, 0.02])^2\times([-0.2, 0.2])^4$ (with \small$\Gamma=\text{blkdiag}(10^{2}I_{2\times 2}, 10I_{4\times 4}), \gamma=2$\normalsize\ ), which is used for defining the uncertainty descriptions $\mathbf{D1}, \mathbf{D2}, \mathbf{D3}$.
\subsubsection{Constraints and Cost} We set horizon $N=12$ and choose cost matrices $\mathbf{Q}=10I_{2N}$,$ \mathbf{R}=20I_{N}$ to penalise deviations from set-point $[2s_{final}, 0]$, along with constraints on speed $v\in[0,12]ms^{-1}$ and acceleration $a\in[-6,5]ms^{-2}$. Every chance constraint is imposed with the same risk level, $\epsilon=0.0228$ to yield $\gamma_{ca}=\gamma_{xu}=2$ for $\mathbf{D2}$, and $\gamma_{ca}=\gamma_{xu}= 6.55$ for $\mathbf{D3}$. 
\subsubsection{Simulation Setup}
We compare the following control policies for the agent corresponding to the different uncertainty descriptions: 1) Robust MPC (RMPC) for $\mathbf{D1}$, 2) Stochastic MPC (SMPC) for $\mathbf{D2}$, and 3) Distributionally Robust MPC (DRMPC) for $\mathbf{D3}$. We run 10 simulations for each policy, starting from $x_0=[3m, 11.8ms^{-1}]$ until the agent reaches $s_{final}=50m$. If \eqref{opt:MPC_final} is infeasible, the brake $a=-6ms^{-2}$ is applied. In Figure~\ref{fig:low_sim}, the first obstacle has a PD controller to go south across the intersection at high speed, while the second obstacle has a PD controller to stop at the intersection.  The first obstacle is re-spawned after crossing the intersection by $20m$. Casadi \cite{andersson2019casadi} is used for modelling the problem \eqref{opt:MPC_final} with Gurobi \cite{gurobi} as the solver.
\subsubsection{Results}
The performance metrics for all the runs are recorded in  Table~\ref{tab:comparison_scenario_all}. We record $\%$ of time steps where constraint violations (in particular, the collision avoidance and speed constraints) and MPC infeasibility were detected, along with average times for solving \eqref{opt:MPC_final} and reaching $s_{final}$ and finally, the average values of the closest distance from the obstacles (computed using \eqref{eq:dual_oa}). In  Figure~\ref{fig:low_sim}, we compare the various formulations for a particular run. 
\begin{figure}[!ht]
\vskip -0.1 true in
  \centering
  \includegraphics[width=0.9\columnwidth]{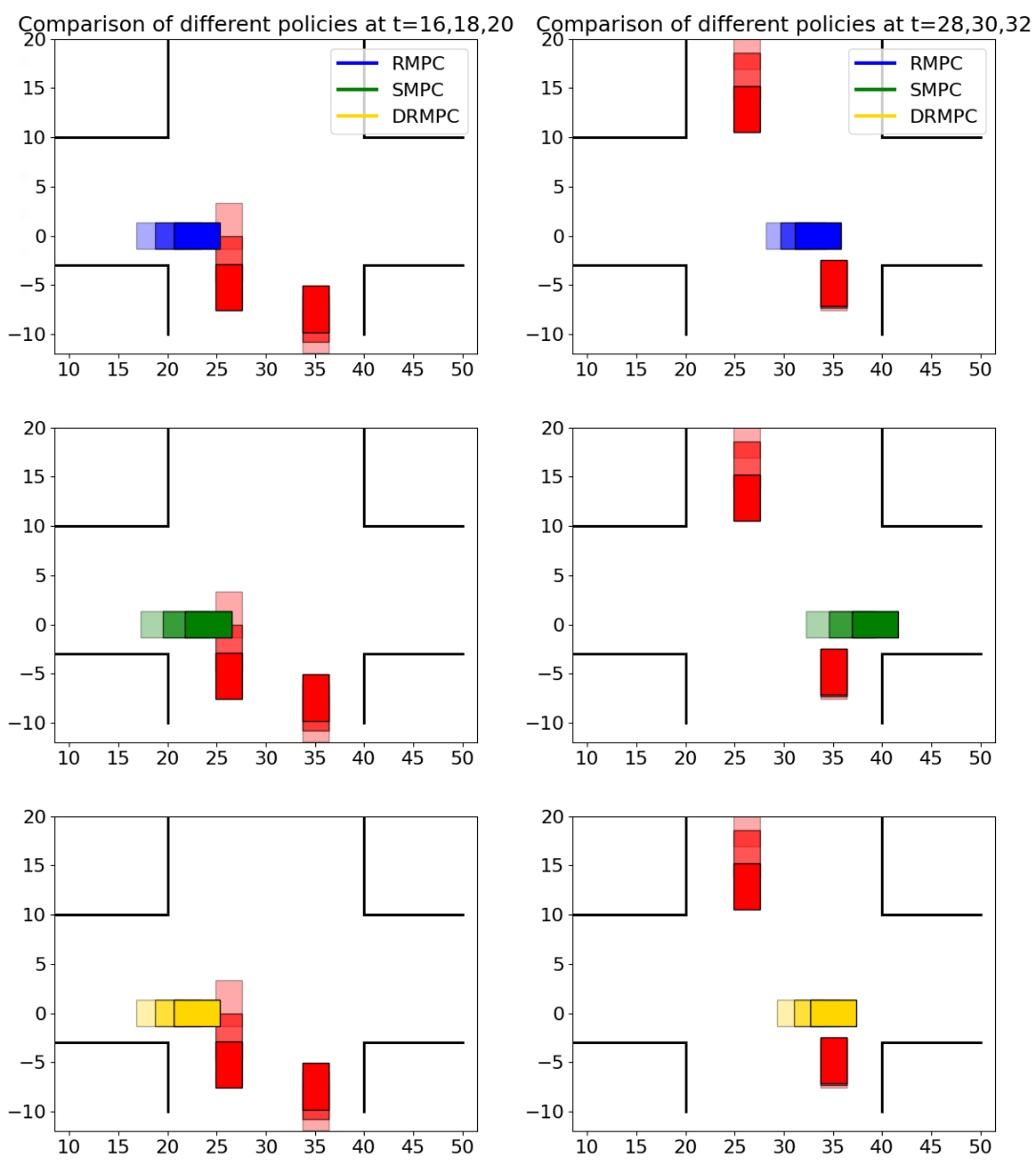}
\vskip -0.1 true in
\caption{\small{Snapshots of agents with RMPC, SMPC and DRMPC for a particular run. Darker colors correspond to later time steps. On the left, all agents slow down and cross behind the first obstacle. On the right, the agents are speeding up to cross the intersection, after slowing down in response to the uncertain second obstacle at the stop. Video: \url{https://youtu.be/wgqO36a1SU8}}}
\label{fig:low_sim}
\vskip -0.28 true in
\end{figure}
\begin{table}[!ht]
\vskip -0.15 true in
    \centering
    \caption{\small{Performance metrics across all policies for \ref{ssec:long}.}}
    \vskip -0.1 true in
    \label{tab:comparison_scenario_all}    
    \begin{tabular}[c]{|c || c | c | c |}
        \hline
            Performance metric &
        RMPC   & SMPC &
        DRMPC\\
        \hline
        \hline
        Constraint violations (\%)    &$\mathbf{1.07}$ & 3.88 & 3.69\\
       Feasibility (\%)    & 95.24 & $\mathbf{97.46}$ & 96.58\\
       
         Avg. solve time (ms)    & $\mathbf{32.62}$ & 54.10 & 54.33\\
               Avg. task completion time (s)    & 9.15 & $\mathbf{8.59}$ & 8.73 \\
               Avg. min. distance from obstacles (m) & 0.52 & 0.19 & 0.36\\
        \hline
    \end{tabular}
    \vskip -0.15 true in
\end{table}
\subsubsection{Discussion} In Figure~\ref{fig:low_sim} and Table~\ref{tab:comparison_scenario_all}, we see that in terms of conservatism (feasibility, time to reach $s_{final}$, constraint violations) the polices are ordered as: SMPC$>$DRMPC$>$RMPC, with SMPC being least conservative. Using the equivalence of norms and {\small{$\mathbf{\Sigma}^{\frac{1}{2}}=0.88\mathbf{\Gamma}^{-1}$}} for our example, it can be seen that {\small{$\tilde{\mathcal{F}}_t(\mathbf{D1})\subset\tilde{\mathcal{F}}_t(\mathbf{D2})$}}, {\small{$\tilde{\mathcal{F}}_t(\mathbf{D3})\subset\tilde{\mathcal{F}}_t(\mathbf{D2})$}}. The relation between {\small{$\tilde{\mathcal{F}}_t(\mathbf{D1}),\tilde{\mathcal{F}}_t(\mathbf{D3})$}} can't be established this way and needs further study. The increased conservatism for RMPC, however, results in fewer constraint violations and its LP formulation of collision avoidance constraints yields faster solve times.
\subsection{Unprotected Left Turn in CARLA}\label{ssec:carla}
We use our setup from \cite{nair2021stochastic} for the next couple of experiments, where we simulate an autonomous vehicle as the controlled agent and $M=1$ other vehicle as the obstacle at a traffic intersection in CARLA\cite{carla_sim_2017}. The agent is tasked to turn left while avoiding collision with the oncoming obstacle.
\subsubsection{Models and Geometry}
The agent's dynamics \eqref{eq:ev_dyn} are modelled by the kinematic bicycle model linearized about the reference, and the obstacle's predictions \eqref{eq:tv_dyn} are given by our implementation of Multipath\cite{multipath_2019} for $N=10$ steps. We use uni-modal predictions for Experiment 1 and multi-modal predictions with 3 modes for Experiment 2. The vehicles' geometries are given by $4.9m\times2.8m$ rectangles. 
\subsubsection{Process Noise Distribution}$[w^\top_t\ n^\top_t]^\top$ is given by the Gaussian distribution used in \cite{nair2021stochastic}. 
\subsubsection{Constraints and Cost} We set $N=10$ and choose $\mathbf{Q}=\text{blkdiag}(Q_1,10,1,\dots,Q_N,10,1)$, where $Q_t=5R^\top_t\text{diag}(1,10^{-2})R_t$, and $\mathbf{R}=I_N\otimes\text{diag}(10,10^3)$ for the cost to penalize deviations from the reference. We impose constraints on speed $v\in[0,12]ms^{-1}$,  acceleration $a\in[-3,2]ms^{-2}$ and steering $\delta\in[-0.5,0.5]$. Every individual chance constraint is imposed with $\epsilon=0.05$ to yield $\gamma_{ca}=\gamma_{xu}=1.64$ for $\mathbf{D2}$, and $\gamma_{ca}=\gamma_{xu}= 4.36$ for $\mathbf{D3}$. For the multi-modal predictions in Experiment 2, the constraints \eqref{eq:dual_oa} are imposed for each mode of the obstacle, and the SMPC finds a single policy sequence \eqref{eq:policy} that satisfies the tightened constraints for all modes.  
\subsubsection*{Experiment 1: SMPC vs DRMPC}
Since the prediction uncertainties are unbounded, we only compare SMPC for $\mathbf{D2}$, and  DRMPC for $\mathbf{D3}$. We run 10 simulations for each policy with different initial conditions.  If \eqref{opt:MPC_final} is infeasible, the brake $a=-6ms^{-2}$ is applied. 
The results are tabulated in Table~\ref{tab:comparison_CARLA}, where we see that compared to SMPC, the agent with DRMPC stays further away from the obstacle and deviates more from the reference. However, this enables maintaining a higher speed and completing the task faster.
\begin{table}[ht]
\vskip -0.1 true in
    \centering
    \caption{\small{Performance metrics for \ref{ssec:carla}, Experiment 1}}
    \vskip -0.1 true in
    \label{tab:comparison_CARLA}    
    \begin{tabular}[c]{|c || c | c |}
        \hline
            Performance metric  & SMPC &
        DRMPC\\
        \hline
        \hline
       Feasibility (\%)    & 97.3 & $\mathbf{98.3}$\\
       
         Avg. solve time (ms)    &  $\mathbf{24.4}$ & 27.1\\
               Avg. task completion time (s)    &  11.17 & $\mathbf{10.32}$ \\
        Avg. min. distance from obstacle (m) & 3.36 & 3.74\\
        Avg. Hausdorff distance from reference (m) & $\mathbf{0.94}$ & 1.24\\
        \hline
    \end{tabular}
    \vskip -0.1 true in
\end{table}
\subsubsection*{Experiment 2: SMPC vs SMPC of \cite{nair2021stochastic}}
We compare the SMPC approach in this paper with that of \cite{nair2021stochastic}, where collision is modelled as the intersection of an ellipse (for the obstacle) and a circle (for the agent), and the free space is inner-approximated using an affine constraint. While the latter is robust to deviations of the agent's orientation along the predictions, this approach induces conservative and undesirable maneuvers for collision avoidance. We summarise our findings in Figure~\ref{fig:carla_sim}, and observe that the new approach allows for a tighter left-turn in Figure~\ref{fig:hfsim2}.
\begin{figure}
    \begin{subfigure}{1.05\columnwidth}
  \centering
  \includegraphics[width=1.0\columnwidth]{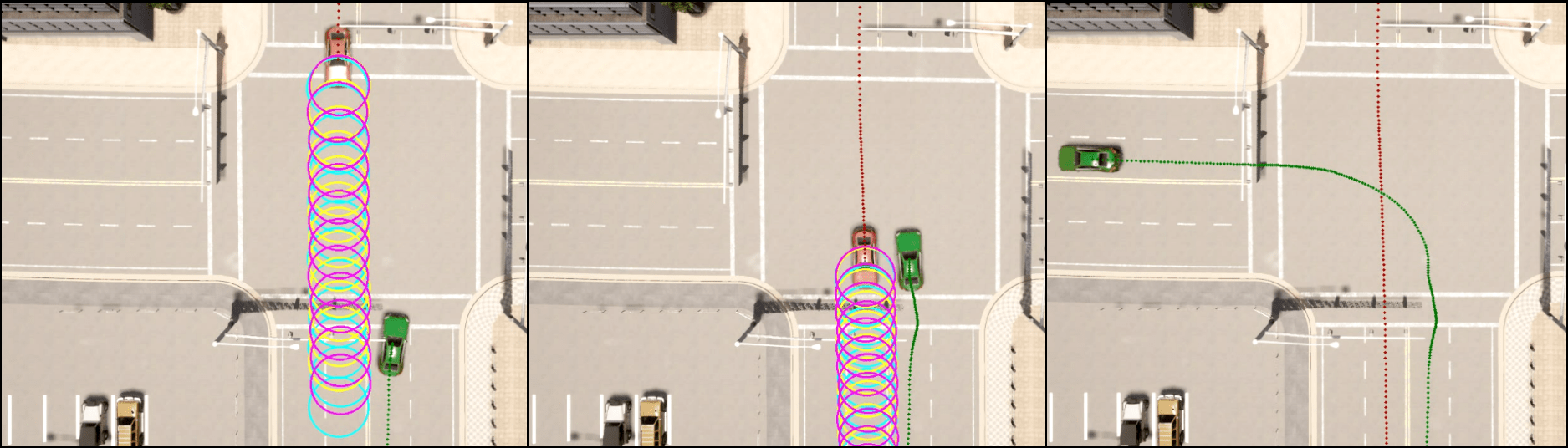}
  \caption{\small{SMPC from \cite{nair2021stochastic}}}
  \label{fig:hfsim1}
\end{subfigure}
\begin{subfigure}{1.05\columnwidth}
  \centering
  \includegraphics[width=1.0\columnwidth]{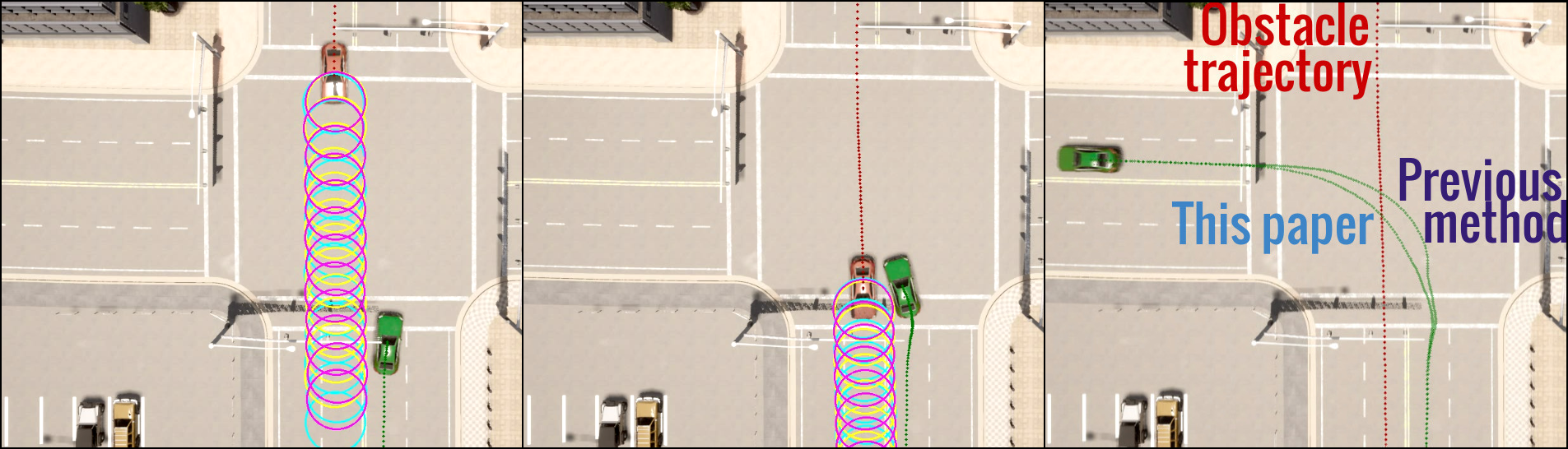}
  \caption{\small{SMPC from this paper}}
  \label{fig:hfsim2}
\end{subfigure}
\vskip -0.1 true in
    \caption{\small{Closed-loop trajectories of the agent (green) with the SMPC from this paper versus that of \cite{nair2021stochastic} for~\ref{ssec:carla}, Experiment 2. The coloured ellipses are the confidence sets given by the multi-modal predictions for the obstacle (red). The improved collision avoidance formulation allows for a tighter turn. Video: \url{https://youtu.be/wgqO36a1SU8}}}
    \label{fig:carla_sim}
    \vskip -0.3 true in
\end{figure}




\section{Conclusion}
\label{sec:conclusion}
We proposed convex MPC formulations for collision avoidance with dynamic obstacles and prediction uncertainties given by 1) Polytopic supports, 2) Gaussian distributions and 3) Arbitrary distributions with known mean and variance, and numerically validated our approach at traffic intersection scenarios. The key idea is to use dual perspective of collision avoidance, and tighten the reformulated dual feasibility problem  for different prediction uncertainties. This approach can be applied for agent and obstacle geometries given by general convex, cone-representable sets such as ellipsoids and polytopes.

\bibliographystyle{IEEEtran}

\bibliography{references.bib}

\begin{thebibliography}{10}
\providecommand{\url}[1]{#1}
\csname url@rmstyle\endcsname
\providecommand{\newblock}{\relax}
\providecommand{\bibinfo}[2]{#2}
\providecommand\BIBentrySTDinterwordspacing{\spaceskip=0pt\relax}
\providecommand\BIBentryALTinterwordstretchfactor{4}
\providecommand\BIBentryALTinterwordspacing{\spaceskip=\fontdimen2\font plus
\BIBentryALTinterwordstretchfactor\fontdimen3\font minus
  \fontdimen4\font\relax}
\providecommand\BIBforeignlanguage[2]{{%
\expandafter\ifx\csname l@#1\endcsname\relax
\typeout{** WARNING: IEEEtran.bst: No hyphenation pattern has been}%
\typeout{** loaded for the language `#1'. Using the pattern for}%
\typeout{** the default language instead.}%
\else
\language=\csname l@#1\endcsname
\fi
#2}}

\bibitem{nhtsa}
N.~H. T.~S. Administration, ``Automated vehicles for safety,''
  \emph{https://www.nhtsa.gov/technology-innovation/automated-vehicles-safety},
  2020.

\bibitem{multipath_2019}
Y.~Chai, B.~Sapp, M.~Bansal, and D.~Anguelov, ``Multipath: Multiple
  probabilistic anchor trajectory hypotheses for behavior prediction,''
  \emph{arXiv preprint arXiv:1910.05449}, 2019.

\bibitem{trajectron_2020}
T.~Salzmann, B.~Ivanovic, P.~Chakravarty, and M.~Pavone, ``Trajectron++:
  Dynamically-feasible trajectory forecasting with heterogeneous data,'' in
  \emph{2020 ECCV}.\hskip 1em plus 0.5em minus 0.4em\relax Springer, 2020.

\bibitem{morari1999model}
M.~Morari and J.~H. Lee, ``Model predictive control: past, present and
  future,'' \emph{Computers \& Chemical Engineering}, 1999.

\bibitem{benblog}
B.~Recht, ``What we've learned to control,''
  \emph{https://www.argmin.net/2020/06/29/tour-revisited/}, 2020.

\bibitem{shen2021collision}
X.~Shen, E.~L. Zhu, Y.~R. St{\"u}rz, and F.~Borrelli, ``Collision avoidance in
  tightly-constrained environments without coordination: a hierarchical control
  approach,'' in \emph{2021 IEEE International Conference on Robotics and
  Automation (ICRA)}.\hskip 1em plus 0.5em minus 0.4em\relax IEEE, 2021.

\bibitem{brudigam2021stochastic}
T.~Br{\"u}digam, M.~Olbrich, D.~Wollherr, and M.~Leibold, ``Stochastic model
  predictive control with a safety guarantee for automated driving,''
  \emph{IEEE Transactions on Intelligent Vehicles}, 2021.

\bibitem{zhou2018joint}
B.~Zhou, W.~Schwarting, D.~Rus, and J.~Alonso-Mora, ``Joint multi-policy
  behavior estimation and receding-horizon trajectory planning for automated
  urban driving,'' in \emph{2018 IEEE International Conference on Robotics and
  Automation (ICRA)}.\hskip 1em plus 0.5em minus 0.4em\relax IEEE, 2018.

\bibitem{wang2020non}
A.~Wang, A.~Jasour, and B.~C. Williams, ``Non-gaussian chance-constrained
  trajectory planning for autonomous vehicles under agent uncertainty,''
  \emph{IEEE Robotics and Automation Letters}, 2020.

\bibitem{de2021scenario}
O.~de~Groot, B.~Brito, L.~Ferranti, D.~Gavrila, and J.~Alonso-Mora,
  ``Scenario-based trajectory optimization in uncertain dynamic environments,''
  \emph{IEEE Robotics and Automation Letters}, 2021.

\bibitem{castillo2020real}
M.~Castillo-Lopez, P.~Ludivig, S.~A. Sajadi-Alamdari, J.~L. Sanchez-Lopez,
  M.~A. Olivares-Mendez, and H.~Voos, ``A real-time approach for
  chance-constrained motion planning with dynamic obstacles,'' \emph{IEEE
  Robotics and Automation Letters}, 2020.

\bibitem{bujarbaruah2021learning}
M.~Bujarbaruah, Y.~R. St{\"u}rz, C.~Holda, K.~H. Johansson, and F.~Borrelli,
  ``Learning environment constraints in collaborative robotics: A decentralized
  leader-follower approach,'' in \emph{2021 IEEE/RSJ International Conference
  on Intelligent Robots and Systems (IROS)}.\hskip 1em plus 0.5em minus
  0.4em\relax IEEE, 2021.

\bibitem{zhu2019chance}
H.~Zhu and J.~Alonso-Mora, ``Chance-constrained collision avoidance for mavs in
  dynamic environments,'' \emph{IEEE Robotics and Automation Letters}, 2019.

\bibitem{dixit2021risk}
A.~Dixit, M.~Ahmadi, and J.~W. Burdick, ``Risk-sensitive motion planning using
  entropic value-at-risk,'' in \emph{2021 European Control Conference
  (ECC)}.\hskip 1em plus 0.5em minus 0.4em\relax IEEE.

\bibitem{borrelli2017predictive}
F.~Borrelli, A.~Bemporad, and M.~Morari, \emph{Predictive control for linear
  and hybrid systems}.\hskip 1em plus 0.5em minus 0.4em\relax Cambridge
  University Press, 2017.

\bibitem{canny1988complexity}
J.~Canny, \emph{The complexity of robot motion planning}.\hskip 1em plus 0.5em
  minus 0.4em\relax MIT press, 1988.

\bibitem{cruise}
``Webwiz.''\hskip 1em plus 0.5em minus 0.4em\relax Cruise Automation,
  https://webviz.io/, 2019.

\bibitem{zhang2020optimization}
X.~Zhang, A.~Liniger, and F.~Borrelli, ``Optimization-based collision
  avoidance,'' \emph{IEEE Transactions on Control Systems Technology}, 2020.

\bibitem{soloperto2019collision}
R.~Soloperto, J.~K{\"o}hler, F.~Allg{\"o}wer, and M.~A. M{\"u}ller, ``Collision
  avoidance for uncertain nonlinear systems with moving obstacles using robust
  model predictive control,'' in \emph{2019 18th European Control Conference
  (ECC)}.\hskip 1em plus 0.5em minus 0.4em\relax IEEE, 2019.

\bibitem{goulart2006optimization}
P.~J. Goulart, E.~C. Kerrigan, and J.~M. Maciejowski, ``Optimization over state
  feedback policies for robust control with constraints,'' \emph{Automatica},
  2006.

\bibitem{balci2021covariance}
I.~M. Balci and E.~Bakolas, ``Covariance control of discrete-time gaussian
  linear systems using affine disturbance feedback control policies,''
  \emph{arXiv preprint arXiv:2103.14428}, 2021.

\bibitem{rahimian2019distributionally}
H.~Rahimian and S.~Mehrotra, ``Distributionally robust optimization: A
  review,'' \emph{arXiv preprint arXiv:1908.05659}, 2019.

\bibitem{mccormick1976computability}
G.~P. McCormick, ``Computability of global solutions to factorable nonconvex
  programs: Part i—convex underestimating problems,'' \emph{Mathematical
  programming}, 1976.

\bibitem{nair2021stochastic}
S.~H. Nair, V.~Govindarajan, T.~Lin, C.~Meissen, H.~E. Tseng, and F.~Borrelli,
  ``Stochastic mpc with multi-modal predictions for traffic intersections,''
  \emph{arXiv preprint arXiv:2109.09792}, 2021.

\bibitem{andersson2019casadi}
J.~A. Andersson, J.~Gillis, G.~Horn, J.~B. Rawlings, and M.~Diehl, ``Casadi: a
  software framework for nonlinear optimization and optimal control,''
  \emph{Mathematical Programming Computation}, 2019.

\bibitem{gurobi}
\BIBentryALTinterwordspacing
{Gurobi Optimization, LLC}, ``{Gurobi Optimizer Reference Manual},'' 2021.
  [Online]. Available: \url{https://www.gurobi.com}
\BIBentrySTDinterwordspacing

\bibitem{carla_sim_2017}
A.~Dosovitskiy, G.~Ros, F.~Codevilla, A.~Lopez, and V.~Koltun, ``{CARLA}: {An}
  open urban driving simulator,'' in \emph{Proceedings of the 1st Annual
  Conference on Robot Learning}, 2017.

\bibitem{boyd2004convex}
S.~Boyd, S.~P. Boyd, and L.~Vandenberghe, \emph{Convex optimization}.\hskip 1em
  plus 0.5em minus 0.4em\relax Cambridge university press, 2004.

\bibitem{calafiore2006distributionally}
G.~C. Calafiore and L.~E. Ghaoui, ``On distributionally robust
  chance-constrained linear programs,'' \emph{Journal of Optimization Theory
  and Applications}, 2006.

\end{thebibliography}
\appendix
\subsection{Matrix definitions}\label{app:matrices}
\scriptsize
\begin{align}
    &\mathbf{h}_t=[h^\top_{t|t}\dots h^{\top}_{t+N-1|t}]^\top,\mathbf{K}_t=\text{blkdiag}\left(K_{t|t},\dots, K_{t+N-1|t}\right),\label{mat:hK}\\
    &\mathbf{M}_t=\begin{bmatrix}
    O&\hdots&\hdots& O\\
    M_{t,t+1|t}&O&\hdots& O\\
    \vdots&\vdots&\vdots&\ddots\\
    M_{t,t+N-1|t}&\hdots & M_{t+N-2,t+N-1|t}&O
    \end{bmatrix},\label{mat:M}\\
    &\mathbf{A}_t=\begin{bmatrix}I_{n_x}\\ A_{t}\\ \vdots\\\prod\limits_{k=t}^{t+N-1}A_{k}\end{bmatrix},  \mathbf{B}_t=\begin{bmatrix}O&\hdots&\hdots& O\\B_{t}&O&\hdots&O\\\vdots&\ddots&\ddots&\vdots\\\prod\limits_{k=t+1}^{t+N-1}A_{k}B_{t}&\hdots&\dots&B_{t+N-1}\end{bmatrix},\label{mat:AB}\\
    &\mathbf{T}_t=\begin{bmatrix}I_{n_x}\\ T_{t}\\\vdots\\\prod\limits_{k=t}^{t+N-1}T_{k}\end{bmatrix}, \mathbf{q}_t=\begin{bmatrix}O\\q_{t}\\\vdots\\q_{t+N-1}+\sum\limits_{k=t}^{t+N-1}\prod\limits_{l=k+1}^{t+N-1}T_{l} q_{k}\end{bmatrix},\label{mat:Tq}\\
    &\mathbf{E}_{t}=\begin{bmatrix}O&\hdots& O\\E_t&\hdots&O\\\vdots&\ddots&\vdots\\\prod\limits_{k=t+1}^{t+N-1}A_{k}E_t&\dots&E_{t+N-1}\end{bmatrix}, \mathbf{F}_{t}=\begin{bmatrix}O&\hdots&\hdots& O\\F_t&O&\hdots&O\\\vdots&\ddots&\ddots&\vdots\\\prod\limits_{k=t+1}^{t+N-1}T_{k}F_t&\hdots&\dots&F_{t+N-1}\end{bmatrix}\label{mat:ER}
    \end{align}
\normalsize 
\subsection{Proof of Proposition~\ref{prop:oa_r}}\label{app:prop1proof}
Consider the Lagrangian of the optimization problem \eqref{eq:dist}, {\small{$\mathcal{L}(z_1,z_2,\lambda^i_{k|t}, \nu^i_{k|t})=||z_1-z_2||_2+\lambda^{i\top}_{k|t}(GR^\top_k(z_1-p_{k|t})-g)+\nu^{i\top}_{k|t}(G^iR^{i\top}_k(z_2-p^i_{k|t})-g^i)$}} where the Lagrange multipliers $\lambda^i_{k|t},\nu^i_{k|t}\in\mathcal{K}^*$. Define the dual objective as
\small
\begin{align*}
    d(\lambda^i_{k|t}, \nu^i_{k|t})&=\text{inf}_{z_1,z_2}\mathcal{L}(z_1,z_2,\lambda^i_{k|t}, \nu^i_{k|t})\\
    &=-\lambda^{i\top}_{k|t}(GR^\top_kp_{k|t}+g)-\nu^{i\top}_{k|t}(GR^{i\top}_kp^i_{k|t}+g^i)\\&~-\sup_{z_1,z_2}\Big(\begin{bmatrix}-R_kG^\top\lambda^i_{k|t}\\-R^i_kG^{i\top}\nu^i_{k|t}\end{bmatrix}^\top\begin{bmatrix}z_1\\z_2\end{bmatrix}-f(z_1-z_2)\Big)
\end{align*}
\normalsize
where $f(\cdot)=||\cdot||_2$. We use properties of convex conjugates\cite{boyd2004convex}  to obtain the dual objective. The convex conjugate of $f(\cdot)$ is given by {\small{$f^*(y)=\sup_x y^\top x-f(x)=\{ 0\text{ if }|| y ||_2\leq 1,\ \infty\text{ otherwise }\}$}}. Moreover if {\small{$h(x)=f(Ax)$}}, then {\small{$h^*(y)=\inf_{A^\top z=y}f^*(z)$}}. For {\small{$y=-[\lambda^{i\top}_{k|t}GR^\top_k\ \nu^{i\top}_{k|t}G^{i}R^{i\top}_k]^\top$}}, {\small{$A=[I_n\ -I_n]$}}, {\small{$x=[z_1^\top z_2^\top]^\top$}}, the dual function can now be written as follows,
\small
\begin{align*}
    d(\lambda^i_{k|t}, \nu^i_{k|t})
    &=-\lambda^{i\top}_{k|t}(GR^\top_kp_{k|t}+g)-\nu^{i\top}_{k|t}(GR^{i\top}_kp^i_{k|t}+g^i)-h^*(y)\\
    &=-\lambda^{i\top}_{k|t}(GR^\top_kp_{k|t}+g)-\nu^{i\top}_{k|t}(GR^{i\top}_kp^i_{k|t}+g^i)
\end{align*}
\normalsize
for {\small{$\Vert\lambda^{i\top}_{k|t}GR_k^\top\Vert\leq 1, \Vert\nu^{i\top}_{k|t}G^iR^{i\top}_k\Vert_2 \leq 1, R_kG^\top\lambda^i_{k|t}+R^i_{k|t}G^{i\top}_{k|t}\nu^i_{k|t}=0$}}. The dual optimization problem for \eqref{eq:dist} can now be written as
\small
\begin{align}\label{eq:dual_dist}
    \max_{\substack{\lambda^i_{k|t},\nu^i_{k|t}\in\mathcal{K}^*}}&-\lambda^{i\top}_{k|t}(GR^\top_kp_{k|t}+g)-\nu^{i\top}_{k|t}(GR^{i\top}_kp^i_{k|t}+g^i)\nonumber\\
    \text{ s.t}&~ \Vert\lambda^{i\top}_{k|t}GR_k^\top\Vert_2\leq1,\Vert \nu^{i\top}_{k|t}G^iR^{i\top}_k\Vert_2 \leq1,\nonumber\\
    &~~R_kG^\top\lambda^i_{k|t}+R^i_{k|t}G^{i\top}_{k|t}\nu^i_{k|t}=0.
\end{align}
\normalsize

Since the sets 
$\{e| Ge\preceq_{\mathcal{K}}g\}, \{o| G^io\preceq_{\mathcal{K}}g^i\}$ are non-empty, the feasible set of \eqref{eq:dist} has a non-empty interior. By Slater's condition, strong duality holds and thus $\text{dist}(\mathbb{S}_k(x_{k|t}), \mathbb{S}^i_k(o^i_{k|t}))$ is equal the optimal objective of \eqref{eq:dual_dist}. Thus, we rewrite $\text{dist}(\mathbb{S}_k(x_{k|t}), \mathbb{S}^i_k(o^i_{k|t}))>0$ as
\small
\begin{align*}
 &\max_{\substack{\lambda^i_{k|t},\nu^i_{k|t}\in\mathcal{K}^*,\Vert\lambda^{i\top}_{k|t}GR_k^\top\Vert_2\leq 1\\ R_kG^\top\lambda^i_{k|t}=-R^i_{k|t}G^{i\top}_{k|t}\nu^i_{k|t}}}d(\lambda^i_{k|t}, \nu^i_{k|t})>0\\
    \Leftrightarrow&  \exists\lambda^i_{k|t},\nu^i_{k|t}\in\mathcal{K}^*:-\lambda^{i\top}_{k|t}(GR^\top_k(p_{k|t}-p^i_{k|t})+g)-\nu^{i\top}_{k|t}g^i>0,\\ &~\Vert\lambda^{i\top}_{k|t}GR_k^\top\Vert_2\leq 1, R_kG^\top\lambda^i_{k|t}=-R^i_{k|t}G^{i\top}_{k|t}\nu^i_{k|t}
\end{align*}
\normalsize$\hfill\blacksquare$
\subsection{Proof of Theorem~\ref{thm:constr_det_r}}\label{app:thm1proof}
  For all $k\in\mathcal{I}_t^{t+N-1}$, $i\in\mathcal{I}_1^M$, define the sets 
 {\footnotesize{
 \begin{align*}
    &\mathcal{S}^i_{k|t}(\mathbf{w}_t,\mathbf{n}_t)=\left\{
    \begin{bmatrix}\mathbf{x}_t\\\mathbf{u}_t\\\mathbf{o}_t\\
    \boldsymbol{\lambda}_{t}\\\boldsymbol{\nu}_{t}\end{bmatrix}\middle\vert
    \begin{aligned}
    &\lambda^i_{k|t},\nu^i_{k|t}\in\mathcal{K}^*,\Vert\lambda^{i\top}_{k|t}GR_{k}^\top\Vert_2\leq 1,\\
    &\lambda^{i\top}_{k|t}GR_{k}^\top=- \nu^{i\top}_{k|t}G^iR^{i\top}_{k},\\
    &\lambda^{i\top}_{k|t}(GR_{k}^\top(C(S^x_{k}\mathbf{x}_t-S^{o,i}_k\mathbf{o}_t)+c_t)+g)\\
    &<-\nu^{i\top}_{k|t}g_i,\\
    &\mathbf{x}_t=\mathbf{A}_tx_t+\mathbf{B}_t\mathbf{u}_t+\mathbf{E}_t\mathbf{w}_t,\\ &\mathbf{o}_t=\mathbf{T}_to_t+\mathbf{q}_t+\mathbf{F}_t\mathbf{n}_t
    \end{aligned}
    \right\}
 \end{align*}
 \begin{align*}
    &\bar{\mathcal{S}}_{k|t}(\mathbf{w}_t,\mathbf{n}_t)=\left\{
    (\mathbf{x}_t,\mathbf{u}_t)\middle\vert
    \begin{aligned}
    &F_j^xS^x_{k+1}\mathbf{x}_t+F_j^uS^u_k\mathbf{u}_t\leq f_j,~\forall j\in\mathcal{I}_1^J\\
    &\mathbf{x}_t=\mathbf{A}_tx_t+\mathbf{B}_t\mathbf{u}_t+\mathbf{E}_t\mathbf{w}_t
    \end{aligned}
    \right\}
 \end{align*}}}  
 and see that \small\begin{align}\label{eq:F_split}
     &\mathcal{S}_t(\mathbf{w}_t,\mathbf{n}_t)=\bigcap_{k=t}^{t+N-1}\left(\bar{\mathcal{S}}_{k|t}(\mathbf{w}_t,\mathbf{n}_t)\bigcap_{i=1}^M\mathcal{S}^i_{k+1|t}(\mathbf{w}_t,\mathbf{n}_t)\right)
     \end{align}
     \normalsize
 We proceed to derive the feasible set for each case as follows.
 
 \subsubsection*{1)} We use \eqref{eq:F_split} to express \eqref{constr:robust} as
 \small
 \begin{align*}
 \mathcal{C}(\mathbf{D1})=\bigcap_{k=t}^{t+N-1}\left(\bigcap_{\mathbf{v}_t\in\mathcal{D}^N}\bar{\mathcal{S}}_{k|t}(\mathbf{w}_t,\mathbf{n}_t)\bigcap_{i=1}^M\mathcal{S}^i_{k+1|t}(\mathbf{w}_t,\mathbf{n}_t)\right)
 \end{align*}
 \normalsize
 For any $k\in\mathcal{I}_{t+1}^{t+N}$, $i\in\mathcal{I}_1^M$, a feasible point in $\bigcap_{\mathbf{v}_t\in\mathcal{D}^N}\mathcal{S}^i_{k|t}(\mathbf{w}_t,\mathbf{n}_t)$ satisfies the constraints
 \small
 \begin{align*}
    &-\lambda^{i\top}_{k|t}(GR_{k}^\top(C(S^x_{k}\mathbf{x}_t-S^{o,i}_k\mathbf{o}_t)+c_t)+g)>\nu^{i\top}_{k|t}g_i, \forall\mathbf{v}_t\in\mathcal{D}^N\\
     &\lambda^i_{k|t},\nu^i_{k|t}\in\mathcal{K}^*,\Vert\lambda^{i\top}_{k|t}GR_k^\top\Vert_2\leq 1,\lambda^{i\top}_{k|t}GR_k^\top=- \nu^{i\top}_{k|t}G^iR^{i\top}_k.
      \end{align*}
      \normalsize
      Plugging in the closed-loop evolution of $\mathbf{x}_t,\mathbf{o}_t$ using $ \mathbf{u}_t=\mathbf{h}_t+\mathbf{M}_t\mathbf{w}_t+\mathbf{K}_t\mathbf{F}_t\mathbf{n}_t$, we can rewrite this using the functions $Y^i_k(\boldsymbol{\theta_t},\lambda^i_{k|t},\nu^i_{k|t})$, $Z^i_k(\boldsymbol{\theta}_t,\lambda^i_{k|t},\nu^i_{k|t})$ as
      \small
      \begin{align*}
  & Y^i_k(\boldsymbol{\theta_t},\lambda^i_{k|t},\nu^i_{k|t})+Z^i_k(\boldsymbol{\theta_t},\lambda^i_{k|t},\nu^i_{k|t})\begin{bmatrix}\mathbf{w}_t\\\mathbf{n}_t\end{bmatrix}>0,~\forall\mathbf{v}_t\in\mathcal{D}^N,\\
     &\lambda^i_{k|t},\nu^i_{k|t}\in\mathcal{K}^*,\Vert\lambda^{i\top}_{k|t}GR_k^\top\Vert_2\leq 1,\lambda^{i\top}_{k|t}GR_k^\top=- \nu^{i\top}_{k|t}G^iR^{i\top}_k.
 \end{align*}
 \normalsize
   The first inequality in the last implication can be equivalently expressed without the quantifier $\forall\mathbf{v}_t\in\mathcal{D}^N$ as
 \small
 \begin{align*}
 & Y^i_k(\boldsymbol{\theta}_t,\lambda^i_{k|t},\nu^i_{k|t})+\min_{\mathbf{v}_t\in\mathcal{D}^N}Z^i_k(\boldsymbol{\theta}_t,\lambda^i_{k|t},\nu^i_{k|t})\mathbf{P}\mathbf{v}_t>0\\
 \Leftrightarrow & Y^i_k(\boldsymbol{\theta}_t,\lambda^i_{k|t},\nu^i_{k|t})-\max_{\Vert \tilde{\mathbf{d}}\Vert_\infty \leq 1}-Z^i_k(\boldsymbol{\theta}_t,\lambda^i_{k|t},\nu^i_{k|t})\mathbf{P}\gamma\mathbf{\Gamma}^{-1}\tilde{\mathbf{d}}>0\\
 \Leftrightarrow & Y^i_k(\boldsymbol{\theta}_t,\lambda^i_{k|t},\nu^i_{k|t})-\gamma\Vert Z^i_k(\boldsymbol{\theta}_t,\lambda^i_{k|t},\nu^i_{k|t})\mathbf{P}\mathbf{\Gamma}^{-1}\Vert_1>0
 \end{align*}
 \normalsize
 where the last implication is obtained by noting that $\Vert\cdot\Vert_1$ is the dual norm of $\Vert\cdot\Vert_\infty$.
 For the state-input constraints, we similarly have for each $j\in\mathcal{I}_1^J$, 
 \small
 \begin{align*}
     &F_j^xS^x_{k+1}\mathbf{x}_t+F_j^uS^u_k\mathbf{u}_t\leq f_j,~ \forall\mathbf{v}_t\in\mathcal{D}^N\\
     \Leftrightarrow&\bar{Y}^j_k(\boldsymbol{\theta_t})+\bar{Z}^j_k(\boldsymbol{\theta_t})\begin{bmatrix}\mathbf{w}_t\\\mathbf{n}_t\end{bmatrix}>0,~\forall\mathbf{v}_t\in\mathcal{D}^N, \\
     \Leftrightarrow& \bar{Y}^j_k(\boldsymbol{\theta}_t)-\gamma\Vert \bar{Z}^j_k(\boldsymbol{\theta}_t)\mathbf{P}\mathbf{\Gamma}^{-1}\Vert_1\geq0
 \end{align*}
 \normalsize
 The feasible set {\small{$\mathcal{F}_t(\mathbf{D1})$}} is thus defined by the above inequalities $\forall k\in\mathcal{I}_{t}^{t+N-1},\forall i\in\mathcal{I}_1^M, \forall j\in\mathcal{I}_1^J$.
 
 \subsubsection*{2)} Now we proceed to the chance-constrained case. For convenience, define $\mathbf{z}_t=[\mathbf{x}^\top_t\ \mathbf{u}^\top_t\ \mathbf{o}^\top_t\ \boldsymbol{\lambda}^\top_t\ \boldsymbol{\nu}_t]^\top$. The joint constraint {\small{$\mathbb{P}(\mathbf{z}_t\in\mathcal{S}_t(\mathbf{w}_t,\mathbf{n}_t))\geq1-\epsilon$}} is difficult to reformulate in the given form, so we construct an inner-approximation to this set using individual chance constraints as follows. \small\begin{align*}
  \mathbb{P}(&\mathbf{z}_t\in\mathcal{S}_t(\mathbf{w}_t,\mathbf{n}_t))\\
  &=\mathbb{P}(\mathbf{z}_t\in\bigcap_{k=t}^{t+N-1}\big(\bar{\mathcal{S}}_{k|t}(\mathbf{w}_t,\mathbf{n}_t)\bigcap_{i=1}^M\mathcal{S}^i_{k+1|t}(\mathbf{w}_t,\mathbf{n}_t)\big))\\
     \Rightarrow\mathbb{P}(&\mathbf{z}_t\not\in\mathcal{S}_t(\mathbf{w}_t,\mathbf{n}_t))\\
     &=\mathbb{P}(\mathbf{z}_t\not\in\bigcup_{k=t}^{t+N-1}\big(\bar{\mathcal{S}}_{k|t}(\mathbf{w}_t,\mathbf{n}_t)\bigcup_{i=1}^M\mathcal{S}^i_{k+1|t}(\mathbf{w}_t,\mathbf{n}_t)\big))\\
   &\leq\sum\limits_{k=t}^{t+N-1}\Big(\mathbb{P}(\mathbf{z}_t\not\in\bar{\mathcal{S}}_{k|t}(\mathbf{w}_t,\mathbf{n}_t))+\sum\limits_{i=1}^M\mathbb{P}(\mathbf{z}_t\not\in\mathcal{S}^i_{k+1|t}(\mathbf{w}_t,\mathbf{n}_t))\Big)
      \end{align*}
     \normalsize
 Thus, if each {\small{$\mathbb{P}(\mathbf{z}_t\not\in\bar{\mathcal{S}}_{k|t}(\mathbf{w}_t,\mathbf{n}_t))\leq\frac{\epsilon}{2N}$}} and {\small{ $\mathbb{P}(\mathbf{z}_t\not\in\mathcal{S}^i_{k+1|t}(\mathbf{w}_t,\mathbf{n}_t))\leq\frac{\epsilon}{2NM}$}}, we get {\small{ $\mathbb{P}(\mathbf{z}_t\in\mathcal{S}_t(\mathbf{w}_t,\mathbf{n}_t))=1-\mathbb{P}(\mathbf{z}_t\not\in\mathcal{S}_t(\mathbf{w}_t,\mathbf{n}_t))\geq 1-\epsilon$}}. 
 So for any $k\in\mathcal{I}_{t+1}^{t+N-1}$, we reformulate {\small{$\mathbb{P}(\mathbf{z}_t\in\mathcal{S}^i_{k|t}(\mathbf{w}_t,\mathbf{n}_t))\geq 1-\frac{\epsilon}{2NM}$}} using the result from \cite{calafiore2006distributionally}: \small$\mathbb{P}(a^\top x >b)\geq 1-\epsilon, x\sim\mathcal{N}(\mu_x,\Sigma_x)\Leftrightarrow a^\top\mu_x-\Phi^{-1}(1-\epsilon)\Vert a^\top\Sigma^{\frac{1}{2}}_x\Vert_2>b$\normalsize. 
 \small
 \begin{align*}
   &\mathbb{P}(\mathbf{z}_t\in\mathcal{S}^i_{k|t}(\mathbf{w}_t,\mathbf{n}_t))\geq 1-\frac{\epsilon}{2NM},\\
     \Leftrightarrow&\mathbb{P}( Y^i_k(\boldsymbol{\theta}_t,\lambda^i_{k|t},\nu^i_{k|t})+Z^i_k(\boldsymbol{\theta}_t,\lambda^i_{k|t},\nu^i_{k|t})\begin{bmatrix}\mathbf{w}_t\\\mathbf{n}_t\end{bmatrix}>0)\geq 1-\epsilon,\\
     &\lambda^i_{k|t},\nu^i_{k|t}\in\mathcal{K}^*,\Vert\lambda^{i\top}_{k|t}GR_k^\top\Vert_2\leq 1,\lambda^{i\top}_{k|t}GR_k^\top=- \nu^{i\top}_{k|t}G^iR^{i\top}_k,\\
     \Leftrightarrow&Y^i_k(\boldsymbol{\theta}_t,\lambda^i_{k|t},\nu^i_{k|t})-\gamma_{ca}\Vert Z^i_k(\boldsymbol{\theta}_t,\lambda^i_{k|t},\nu^i_{k|t})\mathbf{P}\mathbf{\Sigma}^{\frac{1}{2}}\Vert_2>0,\\
     &\lambda^i_{k|t},\nu^i_{k|t}\in\mathcal{K}^*,\Vert\lambda^{i\top}_{k|t}GR_k^\top\Vert_2\leq 1,\lambda^{i\top}_{k|t}GR_k^\top=- \nu^{i\top}_{k|t}G^iR^{i\top}_k.
 \end{align*}
 \normalsize
 where {\small{$\gamma_{ca}=\Phi^{-1}(1-\frac{\epsilon}{2NM})$}}.
The joint-chance constraint {\small{$\mathbb{P}((\mathbf{x}_t,\mathbf{u}_t)\not\in\bar{\mathcal{S}}_{k|t}(\mathbf{w}_t,\mathbf{n}_t))\leq\frac{\epsilon}{2N}$}} is similarly inner-approximated by imposing {\small{$\mathbb{P}(F_j^xS^x_{k+1}\mathbf{x}_t+F_j^uS^u_k\mathbf{u}_t> f_j)\leq\frac{\epsilon}{2NJ}~\forall j\in\mathcal{I}_1^J$}}, which is given by
\small
\begin{align*}
&\bar{Y}^j_k(\boldsymbol{\theta}_t)-\gamma_{xu}\Vert \bar{Z}^j_k(\boldsymbol{\theta}_t)\mathbf{P}\mathbf{\Sigma}^{\frac{1}{2}}\Vert_2\geq0,~\forall j\in\mathcal{I}_1^J.
\end{align*}
\normalsize
where $\gamma_{xu}=\Phi^{-1}(1-\frac{\epsilon}{2NJ})$.
Thus, the inner-approximation of the feasible set is given by {\small{$\mathcal{F}_t(\mathbf{D2})$}}, defined by the above inequalities $\forall k\in\mathcal{I}_{t}^{t+N-1},\forall i\in\mathcal{I}_1^M, \forall j\in\mathcal{I}_1^J$.
 \subsubsection*{3)} For the distributionally robust case, the joint-chance constraints are converted to individual chance constraints in the same way as shown above. However, since only the first two moments of the probability distribution are known, we reformulate the constraint using the result from \cite{calafiore2006distributionally}: {\small{$\inf_{\substack{P\in\bar{\mathcal{P}}\\x\sim P}}\mathbb{P}(a^\top x>b)\geq 1-\epsilon\Leftrightarrow a^\top\mu_x-\sqrt{\frac{1-\epsilon}{\epsilon}}\Vert a^\top\Sigma^{\frac{1}{2}}_x\Vert_2>b$}} , where $\bar{\mathcal{P}}$ is the set of distributions with mean $\mu_x$ and variance $\Sigma_x$. Proceeding identically as in the chance-constrained case but with {\small{$\gamma_{ca}=\sqrt{\frac{2NM-\epsilon}{\epsilon}}$}} and {\small{$\gamma_{xu}=\sqrt{\frac{2NJ-\epsilon}{\epsilon}}$}}, we get {\small{$\mathcal{F}_t(\mathbf{D3})$}}.
 
 $\hfill\blacksquare$

\end{document}